\DeclareMathOperator*{\argmin}{arg\,min}
\newtheorem{thm}{Theorem}
\newtheorem{lemma}{Lemma}
\theoremstyle{definition}
\newtheorem{mydef}{Definition}
\newcommand{\rmnum}[1]{\romannumeral #1}
\newcommand{\Rmnum}[1]{\expandafter\@slowromancap\romannumeral #1@}
\newcommand{\ie}{{\it i.e. }}
\newcommand{\eg}{{\it e.g. }}
\newcommand{\wrt}{{\it w.r.t. }}
\title{RAPID: Rapidly Accelerated Proximal Gradient Algorithms for Convex Minimization}
\author{
Ziming Zhang \quad\quad Venkatesh Saligrama \\
Department of Electrical and Computer Engineering\\
Boston University, Boston, MA 02215 \\
\texttt{\{zzhang14, srv\}@bu.edu} \\
}
\begin{document}

\maketitle

\begin{abstract}
In this paper, we propose a new algorithm to speed-up the convergence of accelerated proximal gradient (APG) methods. In order to minimize a convex function $f(\mathbf{x})$, our algorithm introduces a simple line search step after each proximal gradient step in APG so that a biconvex function $f(\theta\mathbf{x})$ is minimized over scalar variable $\theta>0$ while fixing variable $\mathbf{x}$. We propose two new ways of constructing the auxiliary variables in APG based on the intermediate solutions of the proximal gradient and the line search steps. We prove that at arbitrary iteration step $t (t\geq1)$, our algorithm can achieve a smaller upper-bound for the gap between the current and optimal objective values than those in the traditional APG methods such as FISTA \cite{Beck:2009:FIS:1658360.1658364}, making it converge faster in practice. In fact, our algorithm can be potentially applied to many important convex optimization problems, such as sparse linear regression and kernel SVMs. Our experimental results clearly demonstrate that our algorithm converges faster than APG in all of the applications above, even comparable to some sophisticated solvers.
\end{abstract}

\section{Introduction}\label{sec:intr}
As a general convex minimization algorithm, accelerated proximal gradient (APG) has been attracting more and more attention recently, and it has been widely used in many different research areas such as signal processing \cite{Combettes_Pesquet}, computer vision \cite{conf/cvpr/BaoWLJ12}, and data mining \cite{Zhou:2010:NFG:1933307.1934518}. In general, APG solves the following problem:
\begin{equation}\label{eqn:problem}
\min_{\mathbf{x}\in\mathcal{X}}f(\mathbf{x})=f_1(\mathbf{x})+f_2(\mathbf{x})
\end{equation}
where $\mathcal{X}\subseteq\mathbb{R}^n$ denotes the closed and convex feasible set for variable $\mathbf{x}\in\mathbb{R}^n$, and $f(\mathbf{x}):\mathbb{R}^n\rightarrow\mathbb{R}$ is a convex function, which consists of a convex and differentiable function $f_1$ with Lipschitz constant $L\geq0$ and a convex but non-differentiable function $f_2$. 

In APG, proximal gradient is used to update variables based on the proximity operator, denoted as $\mathbf{prox}$. The basic idea of proximity operator is to approximate a convex function using a strongly convex function whose minimizer in the feasible set is returned as an approximate solution for the original minimization problem. At the optimal solution, the solution returned by proximity operator is identical to the optimal. As an example among classic APG algorithms, the basic version of FISTA \cite{Beck:2009:FIS:1658360.1658364} is shown in Alg. \ref{alg:fista}, where $\forall t=1,\cdots,T, \gamma_t$ denotes the step size. From FISTA, we can see that APG generates an {\it auxiliary variable} (\ie $\mathbf{v}_t$ in Alg. \ref{alg:fista}) for proximal gradient so that the convergence rate of APG for general convex optimization is $O(\frac{1}{T^2})$, which was proved to be optimal for first-order gradient descent methods \cite{tagkey198475}. 

Generally speaking, the computational bottleneck in APG comes from the following two aspects:

\textbf{(\rmnum{1}) Computation of proximal gradients.}
Evaluating the gradients of $f_1$ could be time-consuming, because the evaluation is over the entire dataset. This situation is more prominent for high dimensional and large-scale datasets. Also, projecting a point into the feasible set may be difficult. Many recent approaches have attempted to reduce this computational complexity. Inexact proximal gradient methods \cite{DBLP:conf/nips/SchmidtRB11} allow to approximate the proximal gradients with controllable errors in a faster way while guaranteeing the convergence. Stochastic proximal gradient methods \cite{atchade2014stochastic,rosasco2014convergence} allow to compute the proximal gradients using a small set of data in a stochastic fashion while guaranteeing the convergence as well. Distributed proximal gradient methods \cite{chen2012fast} decompose the optimization problem into sub-problems and solve these sub-problems locally in a distributed way using proximal gradient methods.

\textbf{(\rmnum{2}) Number of iterations.} 
In order to minimize the number of iterations, intuitively in each iteration the resulting function value should be as close to the global minimum as possible. One way to achieve this is to optimize the step size in the proximal gradient (\eg $\gamma_t$ in Alg. \ref{alg:fista}), which unfortunately may be very difficult for many convex functions. Instead, in practice line search \cite{Beck:2009:FIS:1658360.1658364,DBLP:conf/icml/LinX14,DBLP:journals/focm/ScheinbergGB14,DBLP:journals/siamjo/Xiao013} is widely used to estimate the step size so that the function value is decreasing. For instance, backtracking \cite{Beck:2009:FIS:1658360.1658364,DBLP:journals/focm/ScheinbergGB14} is a common line search technique to tune the step size gradually. In general, the line search in the proximal gradient step has to evaluate the function repeatedly by changing the step size so that numerically the learned step size is close to the optimal. Alternatively many types of restarting schemes \cite{DBLP:conf/icml/LinX14,DBLP:journals/mp/Nesterov13,Brendan2013} have been utilized to reduce the number of iterations empirically.  Here additional restarting conditions are established and evaluated periodically. If such conditions are satisfied, the algorithm will be re-initialized using current solutions.

\begin{algorithm}[t]\small
\SetAlgoLined
\SetKwInOut{Input}{Input}\SetKwInOut{Output}{Output}
\Input{$f(\mathbf{x})$, $\lambda$, $\mathbf{x}_0$}
\Output{$\mathbf{x}$}
\BlankLine
\For{$t=1,\cdots,T$}{
$\mathbf{v}_t\leftarrow\mathbf{x}_t+\frac{t-1}{t+2}\left(\mathbf{x}_t-\mathbf{x}_{t-1}\right)$; \quad
$\mathbf{x}_t\leftarrow\mathbf{prox}_{\lambda f_2}(\mathbf{v}_t-\gamma_t\triangledown f_1(\mathbf{v}_t))$;
}
\Return $\mathbf{x}_T$\;
\caption{FISTA \cite{Beck:2009:FIS:1658360.1658364}: Fast Iterative Shrinkage-Thresholding Algorithm}\label{alg:fista}
\end{algorithm}

\subsection{Our Contributions}
In this paper, we focus on reducing the number of iterations, and simply assume that the non-differentiable function $f_2$ is {\it simple} \cite{DBLP:journals/mp/Nesterov13} for performing proximal gradient efficiently, \eg $\ell_1$ norm. 

Our first contribution is to propose a new general algorithm, {\it \underline{R}apidly \underline{A}ccelerated \underline{P}rox\underline{i}mal Gra\underline{d}ient (\textbf{RAPID})}, to speed up the empirical convergence of APG, where an additional simple line search step is introduced after the proximal gradient step. Fig. \ref{fig:intuition}(a) illustrates the basic idea of our algorithm in 2D. After the proximal gradient step, another line search is applied along the direction of the current solution $\mathbf{x}$. Ideally, we would like to find a scalar $\theta>0$ so that $\theta=\argmin_{\{\hat{\theta}|\hat{\theta}\mathbf{x}\in\mathcal{X}\}}f(\hat{\theta}\mathbf{x})$. Therefore, we can guarantee $f(\theta\mathbf{x})\leq f(\mathbf{x})$. The positiveness of $\theta$ guarantees that both $\mathbf{x}$ and $\theta\mathbf{x}$ point to the same direction so that the information from gradient descent can be preserved. Geometrically, this additional line search tries to push $\mathbf{x}$ towards the optimal solution, making the distance between the current and optimal solutions smaller. Unlike the line search in the proximal gradient, the computation of finding the optimal $\theta$ can be very cheap (\eg with close-form solutions) for many convex optimization problems, such as LASSO \cite{tibshirani96regression} and group LASSO \cite{yuan2006model} (see Section \ref{sec:app}). Also, in order to guarantee the convergence of our algorithm, we further propose two ways of constructing the auxiliary variables based on the intermediate solutions in the previous and current iterations, as illustrated in Fig. \ref{fig:intuition}(b).

Our second contribution is that theoretically we prove that at an arbitrary iteration $t$, the upper bound of the objective error $f(\theta_t\mathbf{x}_t)-\min_{\mathbf{x}\in\mathcal{X}}f(\mathbf{x})$ in our algorithm is consistently smaller than that of $f(\mathbf{x}_t)-\min_{\mathbf{x}\in\mathcal{X}}f(\mathbf{x})$ in traditional APG methods such as FISTA. This result implies that in order to achieve the same precision, the number of iterations in our algorithm is probably no more than that in APG. In other words, empirically our algorithm will converge faster than APG.

Our third contribution is that we apply our general algorithm to several interesting convex optimization problems, \ie LASSO, group LASSO, least square loss with trace norm \cite{Ji:2009:AGM:1553374.1553434,journals/mp/Tseng10}, and kernel support vector machines (SVMs), and compare our performance with APG and other existing solvers such as SLEP \cite{Liu:2009:SLEP:manual} and LIBSVM \cite{CC01a}. Our experimental results demonstrate the correctness of our theorems on faster convergence than APG, and surprisingly in most cases, our algorithm can be comparable with those sophisticated solvers.

This paper is organized as follows. In Section \ref{sec:alg}, we explain the details of our RAPID algorithm, including the new line search step and how to construct the auxiliary variables. In Section \ref{sec:app}, we take LASSO, group LASSO, least square loss with trace norm, and kernel SVMs as examples to demonstrate the empirical performance of our algorithm with experimental results and comparison with APG and other existing solvers. The theoretical results on the convergence rate of our algorithm are proven in Section \ref{sec:analysis}, and finally we conclude the paper in Section \ref{sec:con}.

\begin{figure}[t]
\begin{minipage}[b]{0.35\linewidth}
 \begin{center}
 \centerline{\includegraphics[width=0.9\columnwidth]{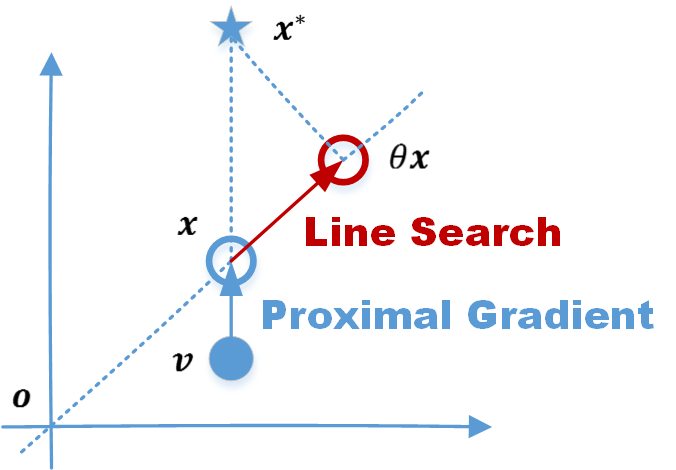}}
 \centerline{\footnotesize{(a)}}
 \end{center}
\end{minipage}
\begin{minipage}[b]{0.65\linewidth}
 \begin{center}
 \centerline{\includegraphics[width=\columnwidth]{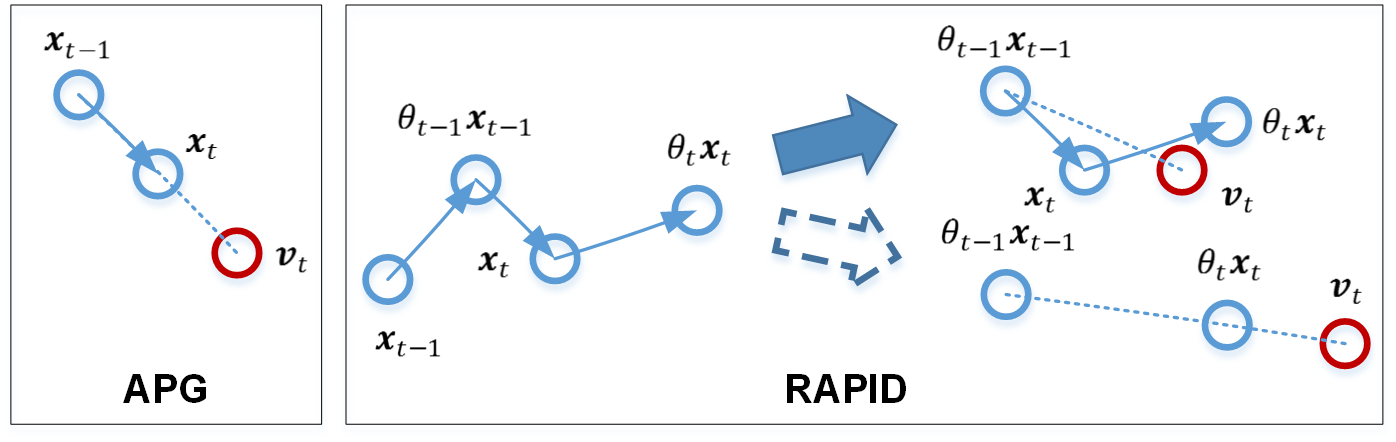}}
 \centerline{\footnotesize{(b)}}
 \end{center}
\end{minipage}\vspace{-5mm}
\caption{\footnotesize{(a) Illustration of the basic idea of our algorithm for improving convergence of APG in 2D, where $\mathbf{o}$ denotes the origin of the 2D coordinate system, $\mathbf{v}$, $\mathbf{x}$, $\theta\mathbf{x}$, and $\mathbf{x}^*$ denote the solutions at the initial point, after the proximal gradient step, after the line search step, and at the minimum, respectively, and the directed lines denote the updating directions. (b) Illustration of the differences between APG and our RAPID in terms of constructing the auxiliary variable $\mathbf{v}_t$ in iteration $t$, where directed lines denote the generating orders of intermediate solutions, and the dotted lines denote the directions of the auxiliary variable starting at the solution in iteration $t-1$. This figure is best viewed in color.}}\label{fig:intuition}
 \vspace{-2mm}
\end{figure}

\section{Algorithms}\label{sec:alg}

In general, there are two basic steps in each iteration in APG algorithms: (1) performing {\it proximal gradients}, and (2) constructing {\it auxiliary variables}. Proximal gradient is defined as applying {\it proximity operator} to a gradient descent step.

\begin{mydef}[Proximity Operator \cite{Combettes_Pesquet}]
The proximal operator $\mathbf{prox}:\mathbb{R}^n\rightarrow\mathbb{R}^n$ is defined by
\begin{equation}
\mathbf{prox}_{\lambda f}(\mathbf{v})=\argmin_{\mathbf{x}\in\mathcal{X}}\left(f(\mathbf{x})+\frac{1}{2\lambda}\|\mathbf{x}-\mathbf{v}\|_2^2\right),
\end{equation}
where $\|\cdot\|_2$ is the Euclidean norm, $\mathcal{X}$ is a closed and convex set, and $\lambda>0$ is a scaling parameter.
\end{mydef}

Alg. \ref{alg:rapid} shows our RAPID algorithm, where in each iteration $t(t\geq 1)$ four steps are involved:
\begin{itemize}
\item[Step 1:] A proximal gradient step using the auxiliary variable $\mathbf{v}_{t-1}$, same as APG.
\item[\textbf{\em Step 2}:] \textbf{\em A simple line search step along the direction of the current solution $\mathbf{x}_t$.} Actually the definition of $\theta_t$ in Alg. \ref{alg:rapid} is equivalent to the following equation:
\begin{equation}
\mathbf{x}_t^*=\mathbf{prox}_{\lambda_{\theta}f}(\mathbf{x}_t)\in\{\tilde{\mathbf{x}}_t|\exists\theta, \tilde{\mathbf{x}}_t=\theta\mathbf{x}_t\in\mathcal{X}\}.
\end{equation}
In other words, this line search step essentially {\it adapts} the current solution $\mathbf{x}_t$ to a better one in a very efficient way (\eg with close-form solutions). 
\item[Step 3:] Updating parameter $\eta_t$ used for constructing the new auxiliary variable $\mathbf{v}_t$, same as APG. Note that any number sequence $\{\eta_t\}$ can be used here as long as the sequence satisfies $\forall t\leq1, \frac{1-\eta_{t+1}}{\eta_{t+1}^2}\leq\frac{1}{\eta_t^2}$.
\item[\textbf{\em Step 4}:] \textbf{\em Updating the new auxiliary variable $\mathbf{v}_t$ using one of the following two equations:}
\begin{eqnarray}\label{eqn:rapid-1}
\mathbf{v}_t=\eta_t(1-\eta_{t-1}^{-1})\theta_{t-1}\mathbf{x}_{t-1}+\eta_t\eta_{t-1}^{-1}\mathbf{x}_t+(1-\eta_t)\theta_t\mathbf{x}_t,
\end{eqnarray}
\begin{eqnarray}\label{eqn:rapid-2}
\mathbf{v}_t=\eta_t(1-\eta_{t-1}^{-1})\theta_{t-1}\mathbf{x}_{t-1}+(1-\eta_t+\eta_t\eta_{t-1}^{-1})\theta_t\mathbf{x}_t.
\end{eqnarray}
In this way, our algorithm guarantees its convergence, but with different convergence rate. See our comparison results in Section \ref{sec:app}.
\end{itemize}
Fig. \ref{fig:intuition}(b) illustrates the differences in constructing the auxiliary variable between APG and our RAPID. In APG, the auxiliary variable $\mathbf{v}_t$ is constructed along the gradient of $\mathbf{x}_t-\mathbf{x}_{t-1}$ starting from $\mathbf{x}_t$. Similarly, in RAPID we would like to construct the auxiliary variable $\mathbf{v}_t$ using $\theta_t\mathbf{x}_t$ and the other intermediate solutions in the previous and current iterations (\ie $\mathbf{x}_{t-1}, \theta_{t-1}\mathbf{x}_{t-1}, \mathbf{x}_t$). It turns out that {\em all possible combinations of intermediate solutions for constructing $\mathbf{v}_t$ end up with Eq. \ref{eqn:rapid-1}}, with guaranteed better upper bounds over $f(\theta_t\mathbf{x}_t)-\min_{\mathbf{x}\in\mathcal{X}}f(\mathbf{x})$ than those over $f(\mathbf{x}_t)-\min_{\mathbf{x}\in\mathcal{X}}f(\mathbf{x})$ in APG in arbitrary iteration $t$ (see Theorem \ref{thm:rapid-1} in Section \ref{sec:analysis}). Under a mild condition, we can adopt the same way as APG to construct $\mathbf{v}_t$ using the final solutions in the previous and current iterations, \ie $\theta_{t-1}\mathbf{x}_{t-1}$ and $\theta_t\mathbf{x}_t$, which is exactly Eq. \ref{eqn:rapid-2}. However, for this setting we lose the theoretical guarantee of better upper bounds than APG, as shown in Theorem \ref{thm:rapid-2} in Section \ref{sec:analysis}. Nevertheless, surprisingly, in our experiments our algorithm using Eq. \ref{eqn:rapid-2} outperforms than that using Eq. \ref{eqn:rapid-1} with significant improvement in terms of empirical convergence (see Section \ref{sec:app} for details).

\begin{algorithm}[t]\small
\SetAlgoLined
\SetKwInOut{Input}{Input}\SetKwInOut{Output}{Output}
\Input{$f(\mathbf{x})$, $\lambda_{\mathbf{x}}$, $\lambda_{\theta}$}
\Output{$\mathbf{x}$}
\BlankLine
$\mathbf{x}_0\leftarrow\mathbf{0}$; $\mathbf{v}_0\leftarrow\mathbf{x}_0$; $\theta_0\leftarrow 1$; $\eta_0\leftarrow 1$;\\
\For{$t=1,\cdots,T$}{
$\mathbf{x}_t\leftarrow\mathbf{prox}_{\lambda_{\mathbf{x}}f_2}(\mathbf{v}_{t-1}-\gamma_t\triangledown f_1(\mathbf{v}_{t-1}))$;
$\theta_t\leftarrow\argmin_{\{\theta|\theta\mathbf{x}_t\in\mathcal{X}\}}\left\{f(\theta\mathbf{x}_t)+\frac{1}{2\lambda_{\theta}}\|\theta\mathbf{x}_t-\mathbf{x}_t\|_2^2\right\}$; \\
$\eta_t=\frac{\sqrt{\eta_{t-1}^4+4\eta_{t-1}^2}-\eta_{t-1}^2}{2}$; \quad Update $\mathbf{v}_t$ using either Eq. \ref{eqn:rapid-1} or Eq. \ref{eqn:rapid-2};
}
\Return $\theta_T\mathbf{x}_T$\;
\caption{RAPID: \underline{R}apidly \underline{A}ccelerated \underline{P}rox\underline{i}mal Gra\underline{d}ient Algorithms}\label{alg:rapid}
\end{algorithm}

\section{Numerical Examples}\label{sec:app}

In this section, we will explain how to apply our algorithm to solve (1) sparse linear regression (\ie LASSO, group LASSO, and least square fitting with trace-norm), and (2) binary kernel SVMs. We also compare our empirical performance with APG and some other existing solvers.

\subsection{Sparse Linear Regression}
\subsubsection{Problem Settings}
We denote $\mathbf{A}=\{\mathbf{a}_i\}_{i=1,\cdots,N}\in\mathbb{R}^{N\times d}$ as a data matrix, $\mathbf{y}=\{y_i\}_{i=1,\cdots,N}\in\mathbb{R}^{N}$ as a regression target vector, $\mathbf{Y}=\{\mathbf{y}_j\}_{j=1,\cdots,M}\in\mathbb{R}^{N\times M}$ as a matrix consisting of $M$ regression tasks, $\mathbf{x}\in\mathbb{R}^d$ as a linear regressor, $\mathbf{X}=\{\mathbf{x}_j\}_{j=1,\cdots,M}\in\mathbb{R}^{d\times M}$ as a matrix consisting of $M$ linear regressors, and $\lambda\geq0$ as a regularization parameter. As follows, for each method we list its loss function, regularizer, proximity operation, and optimal line search scalar $\theta$, which are used in Alg. \ref{alg:rapid}.

\textbf{(\rmnum{1}) Loss functions (\ie $f_1$, convex and differentiable).} Least square loss is used in all the methods, \ie $\frac{1}{2}\|\mathbf{A}\mathbf{x}-\mathbf{y}\|_2^2$ for LASSO and group LASSO, and $\frac{1}{2}\sum_{j=1}^{M}\|\mathbf{A}\mathbf{x}_j-\mathbf{y}_j\|_2^2$ for trace-norm.

\textbf{(\rmnum{2}) Regularizers (\ie $f_2$, convex but non-differentiable).} The corresponding regularizers in LASSO, group LASSO, and trace-norm are $\lambda\|\mathbf{x}\|_1$, $\lambda\sum_{g\in\mathcal{G}}\|\mathbf{x}^{(g)}\|_2$, and $\lambda\|\mathbf{X}\|_*=\mathbf{trace}\left(\sqrt{\mathbf{X}^T\mathbf{X}}\right)=\sum_{j=1}^{\min(d,M)}\sigma_j$, respectively. Here, $\|\cdot\|_1$ and $\|\cdot\|_2$ denote $\ell_1$ and $\ell_2$ norms, $(\cdot)^T$ denotes the matrix transpose operator, $g\in\mathcal{G}$ denotes a group index, $\mathbf{x}^{(g)}$ denotes a group of variables without overlaps, and $\sigma_j$ denotes the $j^{th}$ singular value for matrix $\sqrt{\mathbf{X}^T\mathbf{X}}$.

\textbf{(\rmnum{3}) Proximity operators.} According to their regularizers, the proximity operators can be calculated efficiently as follows, where $\mathbf{u}=\{u_j\}$ and $\mathbf{U}=\{\mathbf{u}_j\}$ are denoted as the variable vector and matrix after the gradient descent:
\begin{itemize}
\item LASSO: $\mathbf{prox}_{\lambda f_{\|\cdot\|_1}}(u_j)=\mathbf{sign}(u_j)\cdot\max\left\{0,|u_j|-\lambda\right\}$, where $\mathbf{sign}(u_j)=1$ if $u_j\geq0$, otherwise, $\mathbf{sign}(u_j)=-1$; and $|u_j|$ denotes its absolute value.
\item Group LASSO: $\mathbf{prox}_{\lambda f_{\|\cdot\|_2}}(u_j^{(g)})=\frac{u_j^{(g)}}{\|\mathbf{u}^{(g)}\|_2}\cdot\max\left\{0,\|\mathbf{u}^{(g)}\|_2-\lambda\right\}$.
\item Trace norm: Letting $\mathbf{U}=\mathbf{P}\cdot\mathbf{diag}(\boldsymbol{\sigma})\cdot\mathbf{Q}^T$, where $\mathbf{P}\in\mathbb{R}^{d\times n}$ and $\mathbf{P}\in\mathbb{R}^{M\times n}$ are two matrices and $\boldsymbol{\sigma}$ is a vector with $n$ singular values of $\mathbf{U}$, then we have $\mathbf{prox}_{\lambda f_{\|\cdot\|_*}}(\mathbf{U})=\mathbf{P}\cdot\mathbf{prox}_{\lambda f_{\|\cdot\|_1}}(\boldsymbol{\sigma})\cdot\mathbf{Q}^T$. Here $\mathbf{prox}_{\lambda f_{\|\cdot\|_1}}$ is an entry-wise operator.
\end{itemize}

\textbf{(\rmnum{4}) Optimal line search scalar $\theta$.} 
For each problem, we re-define $\theta$ as $\theta_t=\argmin_{\theta}\left\{f(\theta\mathbf{x}_t)\right\}$ in arbitrary iteration $t$ by setting $\lambda_{\theta}=+\infty$ in Alg. \ref{alg:rapid}, because there exists a close-form solution in this case. Letting $\forall t, \frac{\partial f(\theta\mathbf{x}_t)}{\partial\theta}=0$, we have $\theta_t=\frac{\mathbf{y}^T\mathbf{A}\mathbf{x}_t-f_2(\mathbf{x}_t)}{\|\mathbf{A}\mathbf{x}_t\|_2^2}$ for LASSO or group LASSO, and $\theta_t=\frac{\sum_j\mathbf{y}_j^T\mathbf{A}(\mathbf{x}_j)_t-\lambda\|\mathbf{X}_t\|_*}{\sum_j\|\mathbf{A}(\mathbf{x}_j)_t\|_2^2}$ for trace-norm.

\subsubsection{Experimental Results}
\begin{figure}[htbp]
\begin{minipage}[b]{0.33\linewidth}
 \begin{center}
 \centerline{\includegraphics[width=0.8\columnwidth]{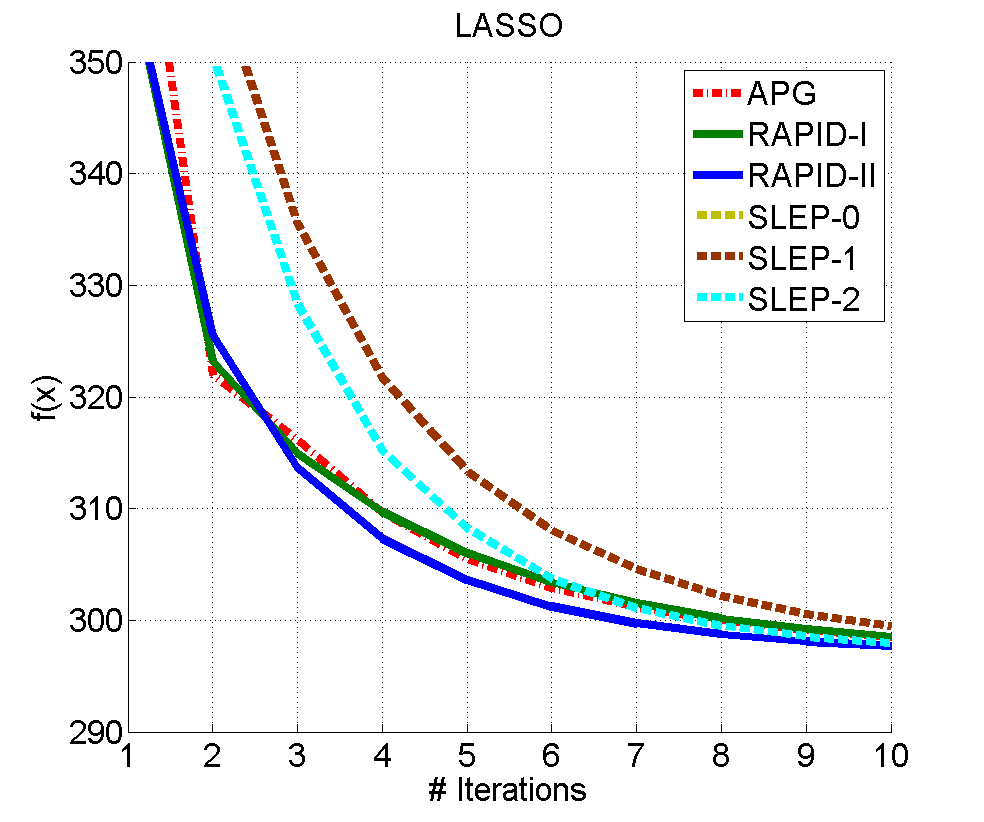}}
 \centerline{\footnotesize{(a)}}
 \end{center}
\end{minipage}
\begin{minipage}[b]{0.33\linewidth}
 \begin{center}
 \centerline{\includegraphics[width=0.8\columnwidth]{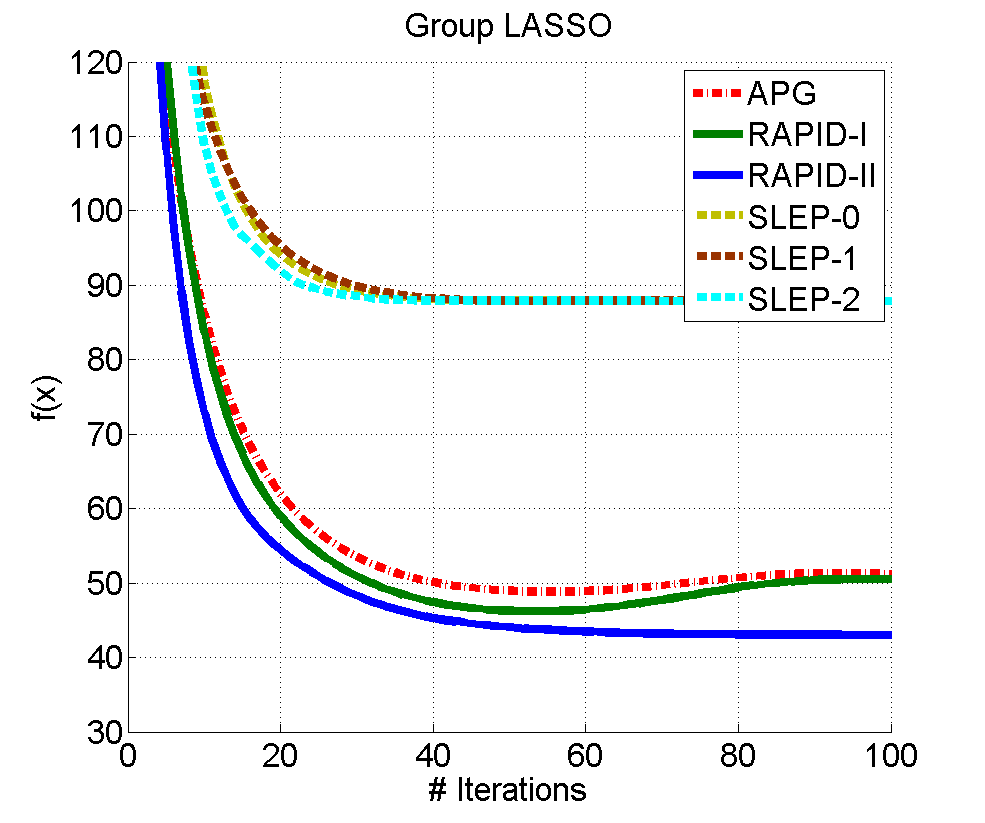}}
 \centerline{\footnotesize{(b)}}
 \end{center}
\end{minipage}
\begin{minipage}[b]{0.33\linewidth}
 \begin{center}
 \centerline{\includegraphics[width=0.8\columnwidth]{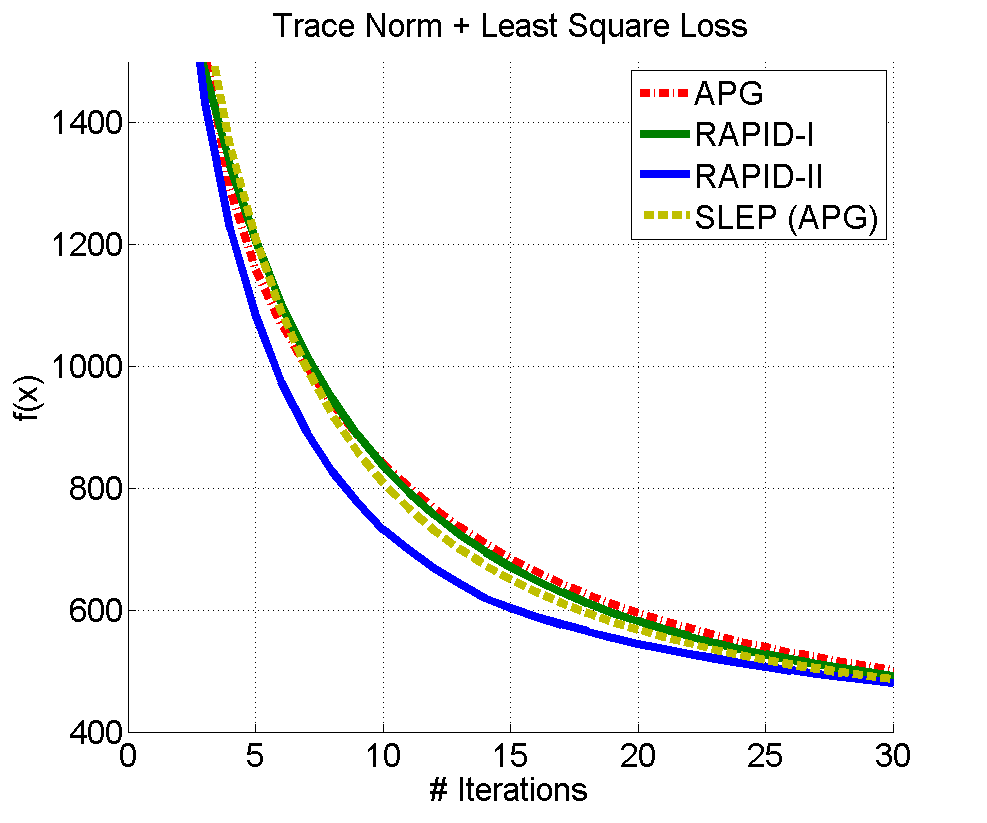}}
 \centerline{\footnotesize{(c)}}
 \end{center}
\end{minipage}\vspace{-4mm}
\caption{\footnotesize{Empirical convergence rate comparison using synthetic data on (a) LASSO, (b) group LASSO, and (c) trace-norm. This figure is best viewed in color.}}\label{fig:synthetic}
\end{figure}

We test and compare our RAPID algorithm on some synthetic data. For each sparse linear regression method, we generate a $10^3$ sample data matrix with $10^3$ dimensions per sample as variable $\mathbf{A}$, and its associated regression target vector (matrix) $\mathbf{y}$ ($\mathbf{Y}$) randomly by normal distributions. The APG and RAPID methods are modified based on the code in \cite{Brendan2013}\footnote{The code can be downloaded from \url{https://github.com/bodono/apg}. We do not use the re-starting scheme in the code.}, and SLEP \cite{Liu:2009:SLEP:manual} is a widely used sparse learning toolbox for our comparison. Here, RAPID-I and RAPID-II denote our algorithm using Eq. \ref{eqn:rapid-1} and Eq. \ref{eqn:rapid-2}, respectively. SLEP-0, SLEP-1, and SLEP-2 are the three settings used in SLEP with different parameters $mFlag$ and $lFlag$ (\ie (mFlag, lFlag)=SLEP-\#: (0,0)=0, (0,1)=1, (1,1)=2). Please refer to the toolbox manual for more details. For trace-norm, SLEP actually implements the APG algorithm as its solver.

Fig. \ref{fig:synthetic} shows our comparison results. In Fig. \ref{fig:synthetic}(a), the performances of SLEP-0 and SLEP-1 are identical, and thus there is only one curve (\ie the brown one) for both methods. Clearly, our RAPID-II algorithm works best in these three cases in terms of empirical convergence rate. Since we can only guarantee that the upper bound of the difference between the current and optimal objective values in each iteration in RAPID is no bigger than that in APG (see Section \ref{sec:analysis}), sometimes the actual objective value using RAPID may be larger than that using APG, as shown in Fig. \ref{fig:synthetic}(a). 

\subsection{Kernel SVMs}

We are interested in solving binary kernel SVMs as well, because it is a widely used {\em constrained} optimization problem. 

\subsubsection{Problem Settings}
Given a kernel matrix $\mathbf{K}\in\mathbb{R}^{N\times N}$ and a binary label vector $\mathbf{y}\in\{-1,1\}^{N}$ with $N$ samples, a binary kernel SVM can be formulated as follows:
\begin{eqnarray}\label{eqn:binary-svms}
\lefteqn{\hspace{-49mm}\min_{\boldsymbol{\alpha}}f(\boldsymbol{\alpha})=\frac{1}{2}\boldsymbol{\alpha}^T\mathbf{Q}\boldsymbol{\alpha}-\mathbf{e}^T\boldsymbol{\alpha}}\\
\begin{array}{ll}
s.t. & \forall i, 0\leq\alpha_i\leq C, \mathbf{y}^T\boldsymbol{\alpha}=0,\nonumber
\end{array}
\end{eqnarray}
where $\mathbf{Q}=\mathbf{K}\odot(\mathbf{y}\mathbf{y}^T)$, $\odot$ is the entry-wise product operator, $\mathbf{e}=\{1\}^N$ denotes a vector of 1's, and $C\geq0$ is a predefined constant.

\begin{algorithm}[t]\footnotesize
\SetAlgoLined
\SetKwInOut{Input}{Input}\SetKwInOut{Output}{Output}
\Input{$\mathbf{Q}$, $\mathbf{y}$, $C$}
\Output{$\boldsymbol{\alpha}$}
\BlankLine
$\mathbf{x}_0\leftarrow\mathbf{0}$; $\mathbf{v}_0\leftarrow\mathbf{x}_0$; $\theta_0\leftarrow 1$; $\eta_0\leftarrow 1$;\\
\For{$t=1,\cdots,T$}{
$\Delta\mathbf{v}_t\leftarrow\mathbf{Q}\mathbf{v}_{t-1}-\mathbf{e}$; $\Delta\mathbf{v}_t\leftarrow\Delta\mathbf{v}_t-\left(\frac{\mathbf{y}^T\Delta\mathbf{v}_t}{\|\mathbf{y}\|_1}\right)\mathbf{y}$; $\gamma_{\mathbf{v}}\leftarrow\frac{\Delta\mathbf{v}_t^T\mathbf{Q}\mathbf{v}_{t-1}-\mathbf{e}^T\Delta\mathbf{v}_t}{\Delta\mathbf{v}_t\mathbf{Q}\Delta\mathbf{v}_t}$; $\boldsymbol{\alpha}_t\leftarrow\mathbf{v}_{t-1}-\gamma_{\mathbf{v}}\Delta\mathbf{v}_t$; \\
\Repeat{$\forall i, 0\leq\alpha_{t,i}\leq C, \mathbf{y}^T\boldsymbol{\alpha}_t=0$}
{
$\forall i, \alpha_{t,i}\leftarrow\max\{0,\min\{C,v_{t,i}\}\}$;
$\boldsymbol{\alpha}_t\leftarrow\boldsymbol{\alpha}_t-\left(\frac{\mathbf{y}^T\boldsymbol{\alpha}_t}{\|\mathbf{y}\|_1}\right)\mathbf{y}$;
}
$\theta_t\leftarrow\min\{\frac{C}{\max_i\alpha_{t,i}},\frac{\mathbf{e}^T\boldsymbol{\alpha}_t}{\boldsymbol{\alpha}_t^T\mathbf{Q}\boldsymbol{\alpha}_t}\}$; $\eta_t=\frac{\sqrt{\eta_{t-1}^4+4\eta_{t-1}^2}-\eta_{t-1}^2}{2}$; Update $\mathbf{v}_t$ using either Eq. \ref{eqn:rapid-1} or Eq. \ref{eqn:rapid-2};
}
\Return $\boldsymbol{\alpha}\leftarrow\theta_t\boldsymbol{\alpha}_t$\;
\caption{RAPID-SVMs}\label{alg:rapid-svm}
\end{algorithm}

With RAPID, Eq. \ref{eqn:binary-svms} can be solved using Alg. \ref{alg:rapid-svm}, where each iteration contains the following 5 steps:
\begin{itemize}
\item[Step 1:] Perform line search for the step size $\gamma_{\mathbf{v}}$ in gradient descent by ignoring the constraints.
\item[Step 2:] Update $\boldsymbol{\alpha}_t$ using gradient descent with $\Delta\mathbf{v}_t$ and learned $\gamma_{\mathbf{v}}$.
\item[Step 3:] Alternatively project $\boldsymbol{\alpha}_t$ into one constraint set while fixing the other until both are satisfied.
\item[Step 4:] Update $\theta_t$ with guarantee that $\theta_t\boldsymbol{\alpha}_t$ satisfies the constraints.
\item[Step 5:] Update the auxiliary variable $\mathbf{v}_t$ as the same as in Alg. \ref{alg:rapid}.
\end{itemize}
Alg. \ref{alg:rapid-svm} can be adapted to an APG solver by fixing $\theta_t=1$ and using the update rule in FISTA for $\mathbf{v}_t$.

\subsubsection{Experimental Results}
\begin{figure}[htbp]
\begin{minipage}[b]{0.33\linewidth}
 \begin{center}
 \centerline{\includegraphics[width=\columnwidth]{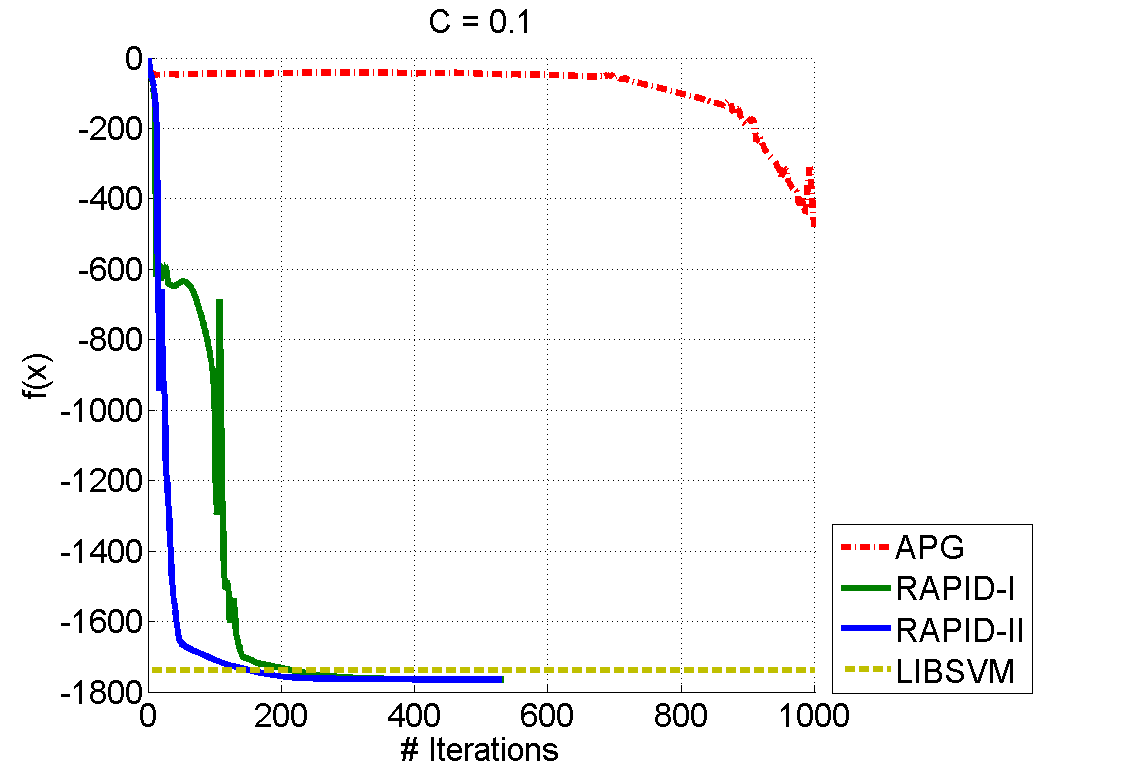}}
 \centerline{\footnotesize{(a)}}
 \end{center}
\end{minipage}
\begin{minipage}[b]{0.33\linewidth}
 \begin{center}
 \centerline{\includegraphics[width=\columnwidth]{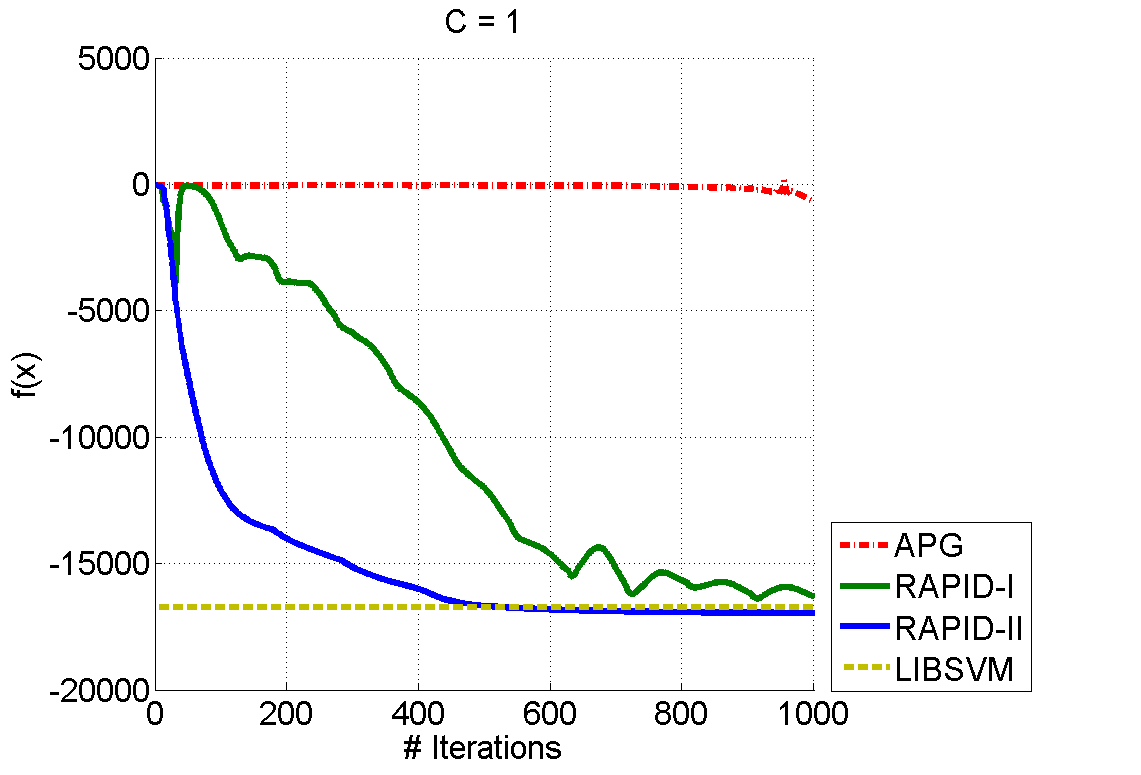}}
 \centerline{\footnotesize{(b)}}
 \end{center}
\end{minipage}
\begin{minipage}[b]{0.33\linewidth}
 \begin{center}
 \centerline{\includegraphics[width=\columnwidth]{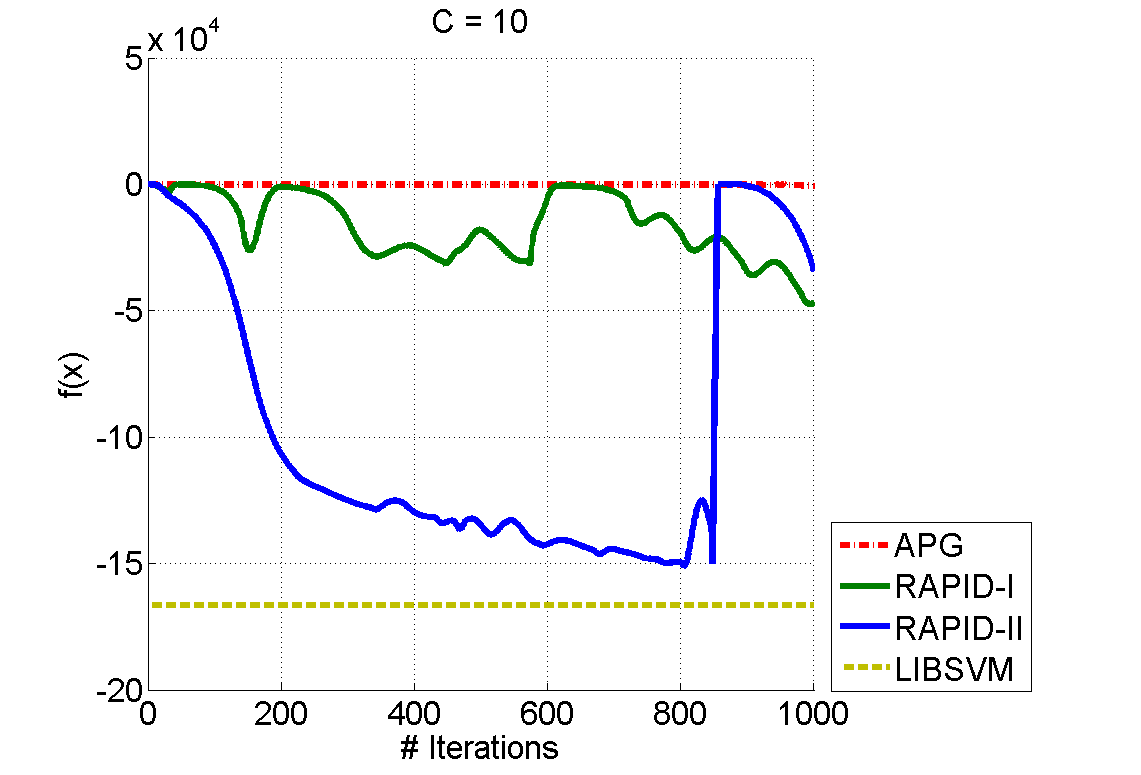}}
 \centerline{\footnotesize{(c)}}
 \end{center}
\end{minipage}\vspace{-4mm}
\caption{\footnotesize{Empirical convergence rate comparison using covtype data for SVMs.}}\label{fig:svm-comparison}
\end{figure}

We test and compare our RAPID-SVMs on the binary {\it covtype} dataset \cite{conf/nips/CollobertBB01}. This dataset contains 581012 samples with 54 dimensions per sample. Duo to the memory issue, we randomly select 5\% data as training and use the rest data as testing. For simplicity, we utilize linear kernels for creating $\mathbf{K}$, and solve Eq. \ref{eqn:binary-svms} with $C$ equal to one of $\{0.1, 1, 10\}$.

Our results are shown in Fig. \ref{fig:svm-comparison}, where RAPID is compared with APG and another popular SVM solver, LIBSVM \cite{CC01a}. The stop criterion for both RAPID is to check whether $1-\frac{\min\{|f(\theta_{t-1}\mathbf{x}_{t-1})|,|f(\theta_t\mathbf{x}_t)|\}}{\max\{|f(\theta_{t-1}\mathbf{x}_{t-1})|,|f(\theta_t\mathbf{x}_t)|\}}\leq 10^{-7}$ is satisfied. Similarly, for APG, $1-\frac{\min\{|f(\mathbf{x}_{t-1})|,|f(\mathbf{x}_t)|\}}{\max\{|f(\mathbf{x}_{t-1})|,|f(\mathbf{x}_t)|\}}\leq 10^{-7}$ is checked. For LIBSVM we use its default settings. As we see, in all these three cases RAPID converges significantly faster than APG. With the increase of $C$, RAPID begins to fluctuate. However, we can easily control this by checking the objective value in each iteration to ensure it will not increase. If increasing, the solution will not be updated. We will add this feature in our implementation in the future. Compared with LIBSVM, when $C=0.1$ or $C=1$, RAPID converges better, resulting in slightly better classification accuracies, while for $C=10$, RAPID performs worse, since it does not converge yet. Still, RAPID-II performs better than RAPID-I in all the cases.

\section{Algorithm Analysis}\label{sec:analysis}
In this section, we present our main theoretical results on the convergence rate of our RAPID algorithm in Theorem \ref{thm:rapid-1} and \ref{thm:rapid-2}, which is clearly better than those of conventional APG methods such as FISTA, leading to faster convergence in practice.
%
%

\begin{lemma}[Sandwich \cite{Baldassarre}]\label{lem:Sandwich}
Let $\tilde{f}$ be the linear approximation of $f$ in $\mathbf{v}$ \wrt $f_1$, \ie $\tilde{f}(\mathbf{w};\mathbf{v})=f_1(\mathbf{v})+\langle\triangledown f(\mathbf{v}), \mathbf{w}-\mathbf{v}\rangle+f_2(\mathbf{w})$, where $\langle\cdot,\cdot\rangle$ denotes the inner product between two vectors. Then 
\begin{equation}
f(\mathbf{w})\leq\tilde{f}(\mathbf{w};\mathbf{v})+\frac{L}{2}\|\mathbf{w}-\mathbf{v}\|_2^2\leq f(\mathbf{w})+\frac{L}{2}\|\mathbf{w}-\mathbf{v}\|_2^2.
\end{equation}
\end{lemma}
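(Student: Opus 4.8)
The plan is to prove the Sandwich lemma directly from the two defining properties of $f_1$: convexity and $L$-Lipschitz continuity of its gradient. Recall $\tilde f(\mathbf{w};\mathbf{v})=f_1(\mathbf{v})+\langle\triangledown f_1(\mathbf{v}),\mathbf{w}-\mathbf{v}\rangle+f_2(\mathbf{w})$, so the difference $f(\mathbf{w})-\tilde f(\mathbf{w};\mathbf{v})$ equals exactly the Bregman-type residual $f_1(\mathbf{w})-f_1(\mathbf{v})-\langle\triangledown f_1(\mathbf{v}),\mathbf{w}-\mathbf{v}\rangle$, since the $f_2(\mathbf{w})$ terms cancel. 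Both inequalities therefore reduce to two-sided bounds on this residual, and the proof splits cleanly into the lower and upper estimates.

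First I would establish the right-hand inequality $\tilde f(\mathbf{w};\mathbf{v})+\frac{L}{2}\|\mathbf{w}-\mathbf{v}\|_2^2\le f(\mathbf{w})+\frac{L}{2}\|\mathbf{w}-\mathbf{v}\|_2^2$. After cancelling the common $\frac{L}{2}\|\mathbf{w}-\mathbf{v}\|_2^2$ and the $f_2(\mathbf{w})$ terms, this is just $f_1(\mathbf{v})+\langle\triangledown f_1(\mathbf{v}),\mathbf{w}-\mathbf{v}\rangle\le f_1(\mathbf{w})$, which is precisely the first-order convexity (subgradient) inequality for the differentiable convex function $f_1$. Next I would establish the left-hand inequality $f(\mathbf{w})\le\tilde f(\mathbf{w};\mathbf{v})+\frac{L}{2}\|\mathbf{w}-\mathbf{v}\|_2^2$, i.e. the descent lemma $f_1(\mathbf{w})\le f_1(\mathbf{v})+\langle\triangledown f_1(\mathbf{v}),\mathbf{w}-\mathbf{v}\rangle+\frac{L}{2}\|\mathbf{w}-\mathbf{v}\|_2^2$. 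The standard argument writes $f_1(\mathbf{w})-f_1(\mathbf{v})=\int_0^1\langle\triangledown f_1(\mathbf{v}+s(\mathbf{w}-\mathbf{v})),\mathbf{w}-\mathbf{v}\rangle\,ds$, subtracts $\langle\triangledown f_1(\mathbf{v}),\mathbf{w}-\mathbf{v}\rangle$, and bounds the resulting integrand by Cauchy--Schwarz and the Lipschitz estimate $\|\triangledown f_1(\mathbf{v}+s(\mathbf{w}-\mathbf{v}))-\triangledown f_1(\mathbf{v})\|_2\le Ls\|\mathbf{w}-\mathbf{v}\|_2$, giving $\int_0^1 Ls\|\mathbf{w}-\mathbf{v}\|_2^2\,ds=\frac{L}{2}\|\mathbf{w}-\mathbf{v}\|_2^2$. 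Adding $f_2(\mathbf{w})$ back to both sides of each inequality recovers the statement for $f$ and $\tilde f$.

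There is really no serious obstacle here; both halves are textbook facts (first-order convexity and the descent lemma), and the only thing to be careful about is bookkeeping — making sure the $f_2(\mathbf{w})$ and $\frac{L}{2}\|\mathbf{w}-\mathbf{v}\|_2^2$ terms are tracked correctly so that each claimed inequality collapses to the corresponding statement about $f_1$ alone, and noting that both $\mathbf{w},\mathbf{v}$ may be taken in the convex domain so the line segment between them stays in the domain of $f_1$. Since the lemma is quoted from \cite{Baldassarre}, an alternative is simply to cite it; but the self-contained two-line derivation above is cleaner and makes the dependence on $L$ and convexity transparent for the later convergence proofs.
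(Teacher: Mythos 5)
Your proof is correct. The paper itself offers no proof of this lemma --- it is imported verbatim from the cited reference --- so there is nothing to compare against; your two-part argument (the first-order convexity inequality of $f_1$ for the right-hand bound, and the descent lemma via the integral representation of $f_1(\mathbf{w})-f_1(\mathbf{v})$ plus Cauchy--Schwarz and gradient Lipschitzness for the left-hand bound) is exactly the standard derivation the citation stands in for, and your bookkeeping of the $f_2(\mathbf{w})$ and $\frac{L}{2}\|\mathbf{w}-\mathbf{v}\|_2^2$ terms is right. The only cosmetic remark is that the $\triangledown f(\mathbf{v})$ in the paper's statement must be read as $\triangledown f_1(\mathbf{v})$ (since $f$ itself is nonsmooth), which you have correctly done.
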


\begin{lemma}[3-Point Property \cite{Baldassarre}]\label{lem:3-point}
If $\hat{\mathbf{w}}=\argmin_{\mathbf{w}\in\mathbb{R}^d}\frac{1}{2}\|\mathbf{w}-\mathbf{w}_0\|_2^2+\phi(\mathbf{w})$, then for any $\mathbf{w}\in\mathbb{R}^d$,
\begin{equation}
\phi(\hat{\mathbf{w}})+\frac{1}{2}\|\hat{\mathbf{w}}-\mathbf{w}_0\|_2^2\leq\phi(\mathbf{w})+\frac{1}{2}\|\mathbf{w}-\mathbf{w}_0\|_2^2-\frac{1}{2}\|\mathbf{w}-\hat{\mathbf{w}}\|_2^2.
\end{equation}
\end{lemma}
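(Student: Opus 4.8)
The plan is to exploit the fact that the objective defining $\hat{\mathbf{w}}$, namely $h(\mathbf{w}):=\frac{1}{2}\|\mathbf{w}-\mathbf{w}_0\|_2^2+\phi(\mathbf{w})$, is $1$-strongly convex: it is the sum of the $1$-strongly convex quadratic $\frac{1}{2}\|\cdot-\mathbf{w}_0\|_2^2$ and the convex function $\phi$. Since $\hat{\mathbf{w}}$ minimizes $h$ over $\mathbb{R}^d$, first-order optimality gives $\mathbf{0}\in\partial h(\hat{\mathbf{w}})$; by the subdifferential sum rule (applicable because the quadratic term is everywhere differentiable) there is a subgradient $\mathbf{g}\in\partial\phi(\hat{\mathbf{w}})$ with $\mathbf{g}+(\hat{\mathbf{w}}-\mathbf{w}_0)=\mathbf{0}$, i.e. $\mathbf{g}=\mathbf{w}_0-\hat{\mathbf{w}}$.

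First I would write the convexity inequality for $\phi$ at $\hat{\mathbf{w}}$ with this subgradient: for every $\mathbf{w}\in\mathbb{R}^d$, $\phi(\mathbf{w})\geq\phi(\hat{\mathbf{w}})+\langle\mathbf{g},\mathbf{w}-\hat{\mathbf{w}}\rangle=\phi(\hat{\mathbf{w}})+\langle\mathbf{w}_0-\hat{\mathbf{w}},\mathbf{w}-\hat{\mathbf{w}}\rangle$, hence $\phi(\hat{\mathbf{w}})\leq\phi(\mathbf{w})-\langle\mathbf{w}_0-\hat{\mathbf{w}},\mathbf{w}-\hat{\mathbf{w}}\rangle$. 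Next I would add $\frac{1}{2}\|\hat{\mathbf{w}}-\mathbf{w}_0\|_2^2$ to both sides and invoke the elementary identity obtained by expanding the square of $(\mathbf{w}-\hat{\mathbf{w}})+(\hat{\mathbf{w}}-\mathbf{w}_0)$, namely $\frac{1}{2}\|\mathbf{w}-\mathbf{w}_0\|_2^2=\frac{1}{2}\|\mathbf{w}-\hat{\mathbf{w}}\|_2^2+\frac{1}{2}\|\hat{\mathbf{w}}-\mathbf{w}_0\|_2^2+\langle\mathbf{w}-\hat{\mathbf{w}},\hat{\mathbf{w}}-\mathbf{w}_0\rangle$, to recognize that $-\langle\mathbf{w}_0-\hat{\mathbf{w}},\mathbf{w}-\hat{\mathbf{w}}\rangle+\frac{1}{2}\|\hat{\mathbf{w}}-\mathbf{w}_0\|_2^2+\frac{1}{2}\|\mathbf{w}-\hat{\mathbf{w}}\|_2^2$ collapses exactly to $\frac{1}{2}\|\mathbf{w}-\mathbf{w}_0\|_2^2-\frac{1}{2}\|\mathbf{w}-\hat{\mathbf{w}}\|_2^2+\frac{1}{2}\|\mathbf{w}-\hat{\mathbf{w}}\|_2^2$... more precisely, after substituting the cross term $-\langle\mathbf{w}_0-\hat{\mathbf{w}},\mathbf{w}-\hat{\mathbf{w}}\rangle=\langle\mathbf{w}-\hat{\mathbf{w}},\hat{\mathbf{w}}-\mathbf{w}_0\rangle$ into the identity, the right-hand side becomes precisely $\phi(\mathbf{w})+\frac{1}{2}\|\mathbf{w}-\mathbf{w}_0\|_2^2-\frac{1}{2}\|\mathbf{w}-\hat{\mathbf{w}}\|_2^2$, which is the claimed bound.

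There is no serious obstacle here; the only point that needs care is treating the subdifferential of the nonsmooth $\phi$ correctly, so that one does not implicitly assume $\phi$ differentiable. An equivalent and arguably cleaner route is to bypass subgradients entirely and use $1$-strong convexity of $h$ directly: since $\hat{\mathbf{w}}$ is the minimizer, the first-order term vanishes and $h(\mathbf{w})\geq h(\hat{\mathbf{w}})+\frac{1}{2}\|\mathbf{w}-\hat{\mathbf{w}}\|_2^2$ for all $\mathbf{w}$; expanding $h$ on both sides and rearranging yields the statement in one line. I would present whichever of these two variants matches the conventions used elsewhere in the paper, noting that both reduce the lemma to a single application of convexity plus one algebraic identity.
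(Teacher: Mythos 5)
Your proof is correct. Note that the paper itself does not prove this lemma --- it is imported by citation --- so there is no in-paper argument to compare against; your derivation (subgradient optimality $\mathbf{w}_0-\hat{\mathbf{w}}\in\partial\phi(\hat{\mathbf{w}})$ plus the expansion of $\|\mathbf{w}-\mathbf{w}_0\|_2^2$ about $\hat{\mathbf{w}}$, or equivalently the one-line $1$-strong-convexity bound $h(\mathbf{w})\geq h(\hat{\mathbf{w}})+\frac{1}{2}\|\mathbf{w}-\hat{\mathbf{w}}\|_2^2$) is the standard and correct one. The only hypothesis you use that the lemma statement leaves implicit is convexity of $\phi$, which is indeed required for the result and is satisfied everywhere the lemma is invoked in the paper ($\phi$ is a scaled convex objective or its linear approximation plus a convex regularizer); the garbled intermediate sentence in your second paragraph is harmless since the corrected identity that follows it is the right one.
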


\begin{lemma}
In Alg. \ref{alg:rapid}, at an arbitrary iteration $t$, we have
\begin{equation}\label{eqn:2-terms-theta}
f(\theta_t\mathbf{x}_t)\leq f(\mathbf{x}_t)-\frac{1}{2\lambda_{\theta}}\|\mathbf{x}_t-\theta_t\mathbf{x}_t\|_2^2.
\end{equation}
\end{lemma}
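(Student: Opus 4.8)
The plan is to obtain the inequality directly from the optimality that \emph{defines} $\theta_t$, treating $\theta=1$ as a competitor. First I would note that $\theta=1$ is feasible for the minimization in Step~2 of Alg.~\ref{alg:rapid}: since $\mathbf{x}_t=\mathbf{prox}_{\lambda_{\mathbf{x}}f_2}(\mathbf{v}_{t-1}-\gamma_t\triangledown f_1(\mathbf{v}_{t-1}))$ is the output of a proximity operator, whose $\argmin$ is taken over $\mathcal{X}$, we have $\mathbf{x}_t\in\mathcal{X}$; hence $1\cdot\mathbf{x}_t=\mathbf{x}_t\in\mathcal{X}$, so $1\in\{\theta\mid\theta\mathbf{x}_t\in\mathcal{X}\}$.

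Next, writing $g(\theta):=f(\theta\mathbf{x}_t)+\frac{1}{2\lambda_{\theta}}\|\theta\mathbf{x}_t-\mathbf{x}_t\|_2^2$, the definition $\theta_t=\argmin_{\{\theta\mid\theta\mathbf{x}_t\in\mathcal{X}\}}g(\theta)$ and the feasibility of $\theta=1$ give
\begin{equation}
f(\theta_t\mathbf{x}_t)+\frac{1}{2\lambda_{\theta}}\|\theta_t\mathbf{x}_t-\mathbf{x}_t\|_2^2 = g(\theta_t)\leq g(1)= f(\mathbf{x}_t)+\frac{1}{2\lambda_{\theta}}\|\mathbf{x}_t-\mathbf{x}_t\|_2^2 = f(\mathbf{x}_t).
\end{equation}
Subtracting $\frac{1}{2\lambda_{\theta}}\|\theta_t\mathbf{x}_t-\mathbf{x}_t\|_2^2=\frac{1}{2\lambda_{\theta}}\|\mathbf{x}_t-\theta_t\mathbf{x}_t\|_2^2$ from both sides yields Eq.~\eqref{eqn:2-terms-theta}.

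There is essentially no hard step here; the only point requiring care is the feasibility of the competitor $\theta=1$, which is exactly what makes the comparison legitimate and is guaranteed by the prox step returning a point of $\mathcal{X}$. As an alternative route that would also give a (slightly sharper) bound, one could invoke the 3-Point Property (Lemma~\ref{lem:3-point}) with $\mathbf{w}_0=\mathbf{x}_t$, $\hat{\mathbf{w}}=\theta_t\mathbf{x}_t$, $\phi=\lambda_{\theta}f+\iota_{\mathcal{C}}$ where $\mathcal{C}=\{\theta\mathbf{x}_t:\theta\mathbf{x}_t\in\mathcal{X}\}$ is a convex ray, and test point $\mathbf{w}=\mathbf{x}_t$; since adding the indicator of a convex set keeps $\phi$ convex, the property applies and produces $f(\theta_t\mathbf{x}_t)\leq f(\mathbf{x}_t)-\frac{1}{\lambda_{\theta}}\|\mathbf{x}_t-\theta_t\mathbf{x}_t\|_2^2$. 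The simpler competitor argument already establishes the stated inequality, so I would present that as the proof.
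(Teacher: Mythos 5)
Your proof is correct, and it takes a slightly more elementary route than the paper's. The paper reaches Eq.~\eqref{eqn:2-terms-theta} by invoking the 3-Point Property (Lemma~\ref{lem:3-point}) on the minimization defining $\theta_t$ and then specializing the resulting bound to the competitor $\theta^*=1$; you instead use only the definition of $\argmin$, namely $g(\theta_t)\leq g(1)$, which already yields exactly the stated inequality after moving the quadratic term to the right-hand side. Both arguments hinge on the same key observation that $\theta=1$ is a feasible competitor; you make this feasibility explicit via $\mathbf{x}_t\in\mathcal{X}$ (the output of the prox step), whereas the paper leaves it implicit. What the paper's heavier machinery would buy is precisely the sharper constant you note at the end: the 3-Point Property contributes an extra $-\frac{1}{2\lambda_{\theta}}\|\mathbf{x}_t-\theta_t\mathbf{x}_t\|_2^2$, giving $f(\theta_t\mathbf{x}_t)\leq f(\mathbf{x}_t)-\frac{1}{\lambda_{\theta}}\|\mathbf{x}_t-\theta_t\mathbf{x}_t\|_2^2$; however, the paper's first line adds back $\frac{1}{2\lambda_{\theta}}\|\theta_t\mathbf{x}_t-\mathbf{x}_t\|_2^2$ and thereby discards that improvement, so the bound it actually states and uses downstream coincides with the one your simpler competitor argument produces. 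Your proof is therefore fully sufficient for the lemma as stated.
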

\begin{proof}\footnotesize
Clearly, the definition of $\theta_t$ in Alg. \ref{alg:rapid} satisfies the condition in Lemma \ref{lem:3-point}. Therefore, we have:
\begin{eqnarray}
f(\theta_t\mathbf{x}_t)&\leq& f(\theta_t\mathbf{x}_t)+\frac{1}{2\lambda_{\theta}}\|\theta_t\mathbf{x}_t-\mathbf{x}_t\|_2^2 \\
&\leq& \min_{\theta_t^*}\left\{f(\theta_t^*\mathbf{x}_t)+\frac{1}{2\lambda_{\theta}}\left(\|\theta_t^*\mathbf{x}_t-\mathbf{x}_t\|_2^2-\|\theta_t^*\mathbf{x}_t-\theta_t\mathbf{x}_t\|_2^2\right)\right\} \,\Leftarrow (\mbox{Lemma}\, \ref{lem:3-point})\nonumber\\
&\leq& f(\mathbf{x}_t)-\frac{1}{2\lambda_{\theta}}\|\mathbf{x}_t-\theta_t\mathbf{x}_t\|_2^2.\quad\quad\quad\quad\quad\quad\quad\quad\quad\quad\quad\quad\quad\Leftarrow (\theta_t^*=1)\nonumber
\end{eqnarray}
\end{proof}

\begin{thm}\label{thm:rapid-1}
Let $\mathbf{x}^*=\argmin_{\mathbf{x}\in\mathcal{X}}f(\mathbf{x})$ and $\lambda_{\theta}=\frac{1}{L}$. If Alg. \ref{alg:rapid} updates $\mathbf{v}_t$ using Eq. \ref{eqn:rapid-1}, in iteration $T (\forall  T\geq1)$ in Alg. \ref{alg:rapid}, we have
\begin{equation}
f(\theta_T\mathbf{x}_T)-f(\mathbf{x}^*)\leq\frac{2L}{(T+1)^2}\left[\|\mathbf{x}^*-\mathbf{z}_0\|_2^2-\sum_{t=1}^{T}\frac{\|\mathbf{x}_t-\theta_t\mathbf{x}_t\|_2^2}{\eta_{t-1}^2}\right],
\end{equation}
where $\mathbf{z}_0$ is a constant.
\end{thm}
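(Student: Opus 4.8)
The plan is to run the estimate-sequence (Tseng/Baldassarre-style) analysis of FISTA, but to thread the line-search Lemma \eqref{eqn:2-terms-theta} through each iteration so that the extra negative terms $-\frac{1}{2\lambda_{\theta}}\|\mathbf{x}_t-\theta_t\mathbf{x}_t\|_2^2$ survive into the final bound. The first move is to re-read the update \eqref{eqn:rapid-1} as a convex combination. Define the auxiliary sequence $\mathbf{z}_t := \theta_{t-1}\mathbf{x}_{t-1}+\eta_{t-1}^{-1}(\mathbf{x}_t-\theta_{t-1}\mathbf{x}_{t-1})$ for $t\ge 1$ and $\mathbf{z}_0:=\mathbf{x}_0=\mathbf{0}$. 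A one-line substitution shows that \eqref{eqn:rapid-1} is exactly $\mathbf{v}_t=\eta_t\mathbf{z}_t+(1-\eta_t)\theta_t\mathbf{x}_t$, and (shifting the index) that $\mathbf{x}_{t+1}=\eta_t\mathbf{z}_{t+1}+(1-\eta_t)\theta_t\mathbf{x}_t$. These two identities are precisely what the coefficients in \eqref{eqn:rapid-1} are engineered to produce, and they are the engine of the telescoping below. Note also $\mathbf{z}_0$ is the ``constant'' in the statement.

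Next I would analyze the prox step $\mathbf{x}_{t+1}=\mathbf{prox}_{\frac1L f_2}(\mathbf{v}_t-\frac1L\nabla f_1(\mathbf{v}_t))$ (taking $\gamma_{t+1}=1/L$, or the effective constant returned by backtracking), which is the minimizer of $\tilde f(\cdot\,;\mathbf{v}_t)+\frac L2\|\cdot-\mathbf{v}_t\|_2^2$. Applying Lemma \ref{lem:3-point} with $\phi=\frac1L\tilde f(\cdot\,;\mathbf{v}_t)$ and $\mathbf{w}_0=\mathbf{v}_t$, and then both inequalities of Lemma \ref{lem:Sandwich} ($f(\mathbf{x}_{t+1})$ on the left, $\tilde f(\mathbf{w};\mathbf{v}_t)\le f(\mathbf{w})$ on the right), gives for every $\mathbf{w}$:
\[
f(\mathbf{x}_{t+1})\le f(\mathbf{w})+\tfrac L2\|\mathbf{w}-\mathbf{v}_t\|_2^2-\tfrac L2\|\mathbf{w}-\mathbf{x}_{t+1}\|_2^2 .
\]
I instantiate this at $\mathbf{w}=\mathbf{x}^*$ and at $\mathbf{w}=\theta_t\mathbf{x}_t$ and take the $\eta_t$-weighted combination of the two. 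Using the elementary identity $\eta\|\mathbf{a}-\mathbf{v}\|_2^2+(1-\eta)\|\mathbf{b}-\mathbf{v}\|_2^2=\|\eta\mathbf{a}+(1-\eta)\mathbf{b}-\mathbf{v}\|_2^2+\eta(1-\eta)\|\mathbf{a}-\mathbf{b}\|_2^2$ on both the $\mathbf{v}_t$-quadratics and the $\mathbf{x}_{t+1}$-quadratics, the cross terms $\eta_t(1-\eta_t)\|\mathbf{x}^*-\theta_t\mathbf{x}_t\|_2^2$ cancel, and the two identities from the first step collapse the rest into $\frac{L\eta_t^2}{2}\big(\|\mathbf{x}^*-\mathbf{z}_t\|_2^2-\|\mathbf{x}^*-\mathbf{z}_{t+1}\|_2^2\big)$. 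The result is
\[
f(\mathbf{x}_{t+1})-\eta_t f(\mathbf{x}^*)-(1-\eta_t)f(\theta_t\mathbf{x}_t)\le\tfrac{L\eta_t^2}{2}\big(\|\mathbf{x}^*-\mathbf{z}_t\|_2^2-\|\mathbf{x}^*-\mathbf{z}_{t+1}\|_2^2\big).
\]

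Now I substitute the line-search Lemma with $\lambda_{\theta}=1/L$, i.e. $f(\mathbf{x}_{t+1})\ge f(\theta_{t+1}\mathbf{x}_{t+1})+\frac L2\|\mathbf{x}_{t+1}-\theta_{t+1}\mathbf{x}_{t+1}\|_2^2$, set $\Delta_t:=f(\theta_t\mathbf{x}_t)-f(\mathbf{x}^*)$, divide by $\eta_t^2$, and apply the step-size identity $\frac{1-\eta_t}{\eta_t^2}=\frac{1}{\eta_{t-1}^2}$ (the equality case of the stated condition, satisfied by $\eta_t=\frac{\sqrt{\eta_{t-1}^4+4\eta_{t-1}^2}-\eta_{t-1}^2}{2}$). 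This yields, for $t\ge1$, the telescoping inequality $E_{t+1}+\frac{L}{2\eta_t^2}\|\mathbf{x}_{t+1}-\theta_{t+1}\mathbf{x}_{t+1}\|_2^2\le E_t$ with potential $E_t:=\frac{\Delta_t}{\eta_{t-1}^2}+\frac L2\|\mathbf{x}^*-\mathbf{z}_t\|_2^2$; the $t=0$ step is handled directly (there $\eta_0=\theta_0=1$ and $\mathbf{v}_0=\mathbf{z}_0=\mathbf{x}_0$, and the weight $1-\eta_0=0$ kills the $f(\theta_0\mathbf{x}_0)$ term), giving the seed $E_1+\frac{L}{2\eta_0^2}\|\mathbf{x}_1-\theta_1\mathbf{x}_1\|_2^2\le\frac L2\|\mathbf{x}^*-\mathbf{z}_0\|_2^2$. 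Summing over $t$, dropping the nonnegative $\frac L2\|\mathbf{x}^*-\mathbf{z}_T\|_2^2$, and reindexing gives $\frac{\Delta_T}{\eta_{T-1}^2}\le\frac L2\big(\|\mathbf{x}^*-\mathbf{z}_0\|_2^2-\sum_{t=1}^T\eta_{t-1}^{-2}\|\mathbf{x}_t-\theta_t\mathbf{x}_t\|_2^2\big)$. The bracket is nonnegative because $E_T\ge0$, so multiplying by $\eta_{T-1}^2$ is safe, and the standard induction $\frac{1}{\eta_t}-\frac{1}{\eta_{t-1}}\ge\frac12$ (from $\frac1{\eta_t^2}-\frac1{\eta_{t-1}^2}=\frac1{\eta_t}$ and $\eta_{t-1}\ge\eta_t$) yields $\eta_{T-1}\le\frac{2}{T+1}$, hence $\eta_{T-1}^2\le\frac{4}{(T+1)^2}$, which turns the last display into the claimed bound.

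The main obstacle is the bookkeeping in the middle: choosing the auxiliary sequence $\mathbf{z}_t$ so that simultaneously $\mathbf{v}_t=\eta_t\mathbf{z}_t+(1-\eta_t)\theta_t\mathbf{x}_t$ and $\mathbf{x}_{t+1}=\eta_t\mathbf{z}_{t+1}+(1-\eta_t)\theta_t\mathbf{x}_t$, and then checking that with these choices the cross terms $\eta_t(1-\eta_t)\|\mathbf{x}^*-\theta_t\mathbf{x}_t\|_2^2$ cancel between the two applications of the quadratic identity — it is exactly this cancellation requirement that forces the coefficients appearing in \eqref{eqn:rapid-1}. Once that is in place, everything else (the $\eta_t$ decay estimate, the degenerate $t=0$ case, and the elimination of $\tilde f$ via Lemma \ref{lem:Sandwich}) is routine.
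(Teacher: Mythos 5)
Your proposal is correct and follows essentially the same route as the paper's proof: the same auxiliary sequence $\mathbf{z}_t=\theta_{t-1}\mathbf{x}_{t-1}+\eta_{t-1}^{-1}(\mathbf{x}_t-\theta_{t-1}\mathbf{x}_{t-1})$, the same combination of the Sandwich and 3-Point lemmas to obtain the per-iteration estimate $f(\mathbf{x}_{t+1})\leq(1-\eta_t)f(\theta_t\mathbf{x}_t)+\eta_t f(\mathbf{x}^*)+\frac{L\eta_t^2}{2}\left(\|\mathbf{x}^*-\mathbf{z}_t\|_2^2-\|\mathbf{x}^*-\mathbf{z}_{t+1}\|_2^2\right)$, the same insertion of the line-search lemma, and the same telescoping via $\frac{1-\eta_{t+1}}{\eta_{t+1}^2}=\frac{1}{\eta_t^2}$ and $\eta_{t-1}\leq\frac{2}{t+1}$. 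The only cosmetic difference is that you derive the convex-combination step by instantiating the prox inequality at $\mathbf{x}^*$ and $\theta_t\mathbf{x}_t$ and cancelling cross terms with the quadratic identity, whereas the paper plugs $\mathbf{v}_t^*=(1-\eta_t)\theta_t\mathbf{x}_t+\eta_t\mathbf{x}^*$ in directly and invokes convexity of $\tilde{f}$; these are equivalent.
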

\begin{proof}\footnotesize
Since $\mathbf{x}_{t+1}$ satisfies the conditions in Lemma \ref{lem:Sandwich} and Lemma \ref{lem:3-point}, we have:
\begin{eqnarray}\label{eqn:2-terms-x}
f(\mathbf{x}_{t+1})&\leq& \tilde{f}(\mathbf{x}_{t+1};\mathbf{v}_{t})+\frac{L}{2}\|\mathbf{x}_{t+1}-\mathbf{v}_{t}\|_2^2 \leq \tilde{f}(\mathbf{v}_{t}^*;\mathbf{v}_{t})+\frac{L}{2}\left(\|\mathbf{v}_{t}^*-\mathbf{v}_{t}\|_2^2-\|\mathbf{v}_{t}^*-\mathbf{x}_{t+1}\|_2^2\right)\nonumber\\
&=&\tilde{f}(\mathbf{v}_t^*; \mathbf{v}_t)+\frac{L\eta_t^2}{2}\left(\|\mathbf{x}^*-\mathbf{z}_t\|_2^2-\|\mathbf{x}^*-\mathbf{z}_{t+1}\|_2^2\right),
\end{eqnarray}
where $\mathbf{v}_{t}^*=(1-\eta_{t})\theta_t\mathbf{x}_t+\eta_{t}\mathbf{x}^*$, and $\mathbf{z}_{t+1}=\theta_{t}\mathbf{x}_t+\eta_{t}^{-1}(\mathbf{x}_{t+1}-\theta_t\mathbf{x}_t)$. Due to the convexity of $\tilde{f}$, we can rewrite Eq. \ref{eqn:2-terms-x} as follows:
\begin{eqnarray}\label{eqn:theta_tx}
f(\mathbf{x}_{t+1})&\leq&(1-\eta_t)\tilde{f}(\theta_t\mathbf{x}_t;\mathbf{v}_t)+\eta_t\tilde{f}(\mathbf{x}^*;\mathbf{v}_t)+\frac{L\eta_t^2}{2}\left(\|\mathbf{x}^*-\mathbf{z}_t\|_2^2-\|\mathbf{x}^*-\mathbf{z}_{t+1}\|_2^2\right)\nonumber\\
&\leq&(1-\eta_t)f(\theta_t\mathbf{x}_t)+\eta_t f(\mathbf{x}^*)+\frac{L\eta_t^2}{2}\left(\|\mathbf{x}^*-\mathbf{z}_t\|_2^2-\|\mathbf{x}^*-\mathbf{z}_{t+1}\|_2^2\right).
\end{eqnarray}
Based on Eq. \ref{eqn:2-terms-theta} and \ref{eqn:theta_tx}, we have:
\begin{equation}\label{eqn:key}
f(\theta_{t+1}\mathbf{x}_{t+1})\leq(1-\eta_t)f(\theta_t\mathbf{x}_t)+\eta_t f(\mathbf{x}^*)+\frac{L\eta_t^2}{2}\left(\|\mathbf{x}^*-\mathbf{z}_t\|_2^2-\|\mathbf{x}^*-\mathbf{z}_{t+1}\|_2^2\right)-\frac{L}{2}\|\mathbf{x}_{t+1}-\theta_{t+1}\mathbf{x}_{t+1}\|_2^2.
\end{equation}
Letting $\epsilon_t=f(\theta_t\mathbf{x}_t)-f(\mathbf{x}^*)$ and $\Phi_t=\frac{L}{2}\|\mathbf{x}^*-\mathbf{z}_t\|_2^2$, we can rewrite Eq. \ref{eqn:key} as follows:
\begin{eqnarray}
&&\epsilon_{t+1}\leq(1-\eta_t)\epsilon_t+\eta_t^2(\Phi_t-\Phi_{t+1})-\frac{L}{2}\|\mathbf{x}_{t+1}-\theta_{t+1}\mathbf{x}_{t+1}\|_2^2\\
&\Leftrightarrow& \frac{1}{\eta_t^2}\epsilon_{t+1}-\frac{1-\eta_t}{\eta_t^2}\epsilon_t\leq\Phi_t-\Phi_{t+1}-\frac{L}{2\eta_t^2}\|\mathbf{x}_{t+1}-\theta_{t+1}\mathbf{x}_{t+1}\|_2^2\nonumber
\end{eqnarray}
$\because$ the sequence $\{\eta_t\}_{t=0,1,\cdots}$ in Alg. \ref{alg:rapid} satisfies $\forall t, \frac{1-\eta_{t+1}}{\eta_{t+1}^2}=\frac{1}{\eta_t^2}$, $\therefore \frac{1-\eta_{t+1}}{\eta_{t+1}^2}\epsilon_{t+1}=\frac{1}{\eta_{t}^2}\epsilon_{t+1}$, leading to
\begin{eqnarray}
\frac{1-\eta_{t+1}}{\eta_{t+1}^2}\epsilon_{t+1}-\frac{1-\eta_t}{\eta_t^2}\epsilon_t\leq\Phi_t-\Phi_{t+1}-\frac{L}{2\eta_t^2}\|\mathbf{x}_{t+1}-\theta_{t+1}\mathbf{x}_{t+1}\|_2^2.
\end{eqnarray}
Letting $g_t=\frac{1-\eta_t}{\eta_t^2}\epsilon_t$, we have $g_0=0$ and
\begin{eqnarray}\label{eqn:deduction}
\lefteqn{\hspace{-16mm}\forall T\geq1, \, \sum_{t=0}^{T-1}\left(g_{t+1}-g_t\right)=g_T-g_0=\frac{\epsilon_T}{\eta_{T-1}^2}\leq\sum_{t=0}^{T-1}\left(\Phi_t-\Phi_{t+1}\right)-\frac{L}{2}\sum_{t=0}^{T-1}\frac{\|\mathbf{x}_{t+1}-\theta_{t+1}\mathbf{x}_{t+1}\|_2^2}{\eta_t^2}}\nonumber\\
&& \leq\Phi_0-\frac{L}{2}\sum_{t=1}^{T}\frac{\|\mathbf{x}_t-\theta_t\mathbf{x}_t\|_2^2}{\eta_{t-1}^2}=\frac{L}{2}\|\mathbf{x}^*-\mathbf{z}_0\|_2^2-\frac{L}{2}\sum_{t=1}^{T}\frac{\|\mathbf{x}_t-\theta_t\mathbf{x}_t\|_2^2}{\eta_{t-1}^2}.
\end{eqnarray}
Since the sequence $\{\eta_t\}_{t=0,1,\cdots}$ also satisfies $\eta_t\leq \frac{2}{t+2}$, based on Eq. \ref{eqn:deduction} we have $\frac{\epsilon_T(T+1)^2}{4}\leq\frac{\epsilon_T}{\eta_{T-1}^2}$. Therefore,
\begin{eqnarray}\label{eqn:conclusion}
\epsilon_T\leq\frac{2L}{(T+1)^2}\left[\|\mathbf{x}^*-\mathbf{z}_0\|_2^2-\sum_{t=1}^{T}\frac{\|\mathbf{x}_t-\theta_t\mathbf{x}_t\|_2^2}{\eta_{t-1}^2}\right].
\end{eqnarray}
\end{proof}

\begin{thm}\label{thm:rapid-2}
Let $\mathbf{x}^*=\argmin_{\mathbf{x}\in\mathcal{X}}f(\mathbf{x})$ and $\lambda_{\theta}=\frac{1}{L}$. If Alg. \ref{alg:rapid} updates $\mathbf{v}_t$ using Eq. \ref{eqn:rapid-2}, and suppose in any iteration $t (t\geq1)$,  $\mathbf{v}_t^*=(1-\eta_t)\theta_t\mathbf{x}_t+\eta_t\mathbf{x}^*$ and $\|\mathbf{v}_t^*-\mathbf{x}_{t+1}\|_2^2=\|\mathbf{v}_t^*-\theta_{t+1}\mathbf{x}_{t+1}\|_2^2+\xi_{t+1}$, then in iteration $T (\forall T\geq1)$ we have
\begin{equation}
f(\theta_T\mathbf{x}_T)-f(\mathbf{x}^*)\leq\frac{2L}{(T+1)^2}\left[\|\mathbf{x}^*-\mathbf{z}_0\|_2^2-\sum_{t=1}^{T}\left(\frac{\|\mathbf{x}_t-\theta_t\mathbf{x}_t\|_2^2}{\eta_{t-1}^2}+\xi_t\right)\right],
\end{equation}
where $\mathbf{z}_0$ is a constant.
\end{thm}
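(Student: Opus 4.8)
The plan is to run the proof of Theorem~\ref{thm:rapid-1} essentially verbatim, changing only the auxiliary sequence $\{\mathbf{z}_t\}$ and peeling off one extra error term. First I would redefine that sequence: set $\mathbf{z}_0=\mathbf{x}_0$ and, for $t\geq 0$,
\[
\mathbf{z}_{t+1}=\theta_t\mathbf{x}_t+\eta_t^{-1}\bigl(\theta_{t+1}\mathbf{x}_{t+1}-\theta_t\mathbf{x}_t\bigr),
\]
i.e.\ the same recursion as in Theorem~\ref{thm:rapid-1} but with the line-searched iterate $\theta_{t+1}\mathbf{x}_{t+1}$ replacing $\mathbf{x}_{t+1}$. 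The one algebraic check is that, with this choice, Eq.~\ref{eqn:rapid-2} is exactly $\mathbf{v}_t=(1-\eta_t)\theta_t\mathbf{x}_t+\eta_t\mathbf{z}_t$ --- a one-line expansion of its right-hand side. Taking $\mathbf{v}_t^*=(1-\eta_t)\theta_t\mathbf{x}_t+\eta_t\mathbf{x}^*$ as in the hypothesis, this identity immediately gives $\|\mathbf{v}_t^*-\mathbf{v}_t\|_2^2=\eta_t^2\|\mathbf{x}^*-\mathbf{z}_t\|_2^2$, and since $\theta_{t+1}\mathbf{x}_{t+1}=(1-\eta_t)\theta_t\mathbf{x}_t+\eta_t\mathbf{z}_{t+1}$ also $\|\mathbf{v}_t^*-\theta_{t+1}\mathbf{x}_{t+1}\|_2^2=\eta_t^2\|\mathbf{x}^*-\mathbf{z}_{t+1}\|_2^2$.

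Next I would reuse the Sandwich / 3-Point chain of Theorem~\ref{thm:rapid-1} unchanged: because $\mathbf{x}_{t+1}$ is the prox point obtained from $\mathbf{v}_t$ (step size $1/L$), Lemma~\ref{lem:Sandwich} and Lemma~\ref{lem:3-point} give $f(\mathbf{x}_{t+1})\leq\tilde f(\mathbf{v}_t^*;\mathbf{v}_t)+\frac{L}{2}\bigl(\|\mathbf{v}_t^*-\mathbf{v}_t\|_2^2-\|\mathbf{v}_t^*-\mathbf{x}_{t+1}\|_2^2\bigr)$, exactly Eq.~\ref{eqn:2-terms-x}. The single new step handles the term $\|\mathbf{v}_t^*-\mathbf{x}_{t+1}\|_2^2$: in place of the clean norm telescoping available for Eq.~\ref{eqn:rapid-1}, I insert the hypothesis $\|\mathbf{v}_t^*-\mathbf{x}_{t+1}\|_2^2=\|\mathbf{v}_t^*-\theta_{t+1}\mathbf{x}_{t+1}\|_2^2+\xi_{t+1}$ and then use the two norm identities above, so that $\|\mathbf{v}_t^*-\mathbf{v}_t\|_2^2-\|\mathbf{v}_t^*-\mathbf{x}_{t+1}\|_2^2=\eta_t^2\bigl(\|\mathbf{x}^*-\mathbf{z}_t\|_2^2-\|\mathbf{x}^*-\mathbf{z}_{t+1}\|_2^2\bigr)-\xi_{t+1}$. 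From here, exactly as before, convexity of $\tilde f$ together with $\tilde f(\mathbf{w};\mathbf{v})\leq f(\mathbf{w})$ replaces $\tilde f(\mathbf{v}_t^*;\mathbf{v}_t)$ by $(1-\eta_t)f(\theta_t\mathbf{x}_t)+\eta_tf(\mathbf{x}^*)$, and chaining in Eq.~\ref{eqn:2-terms-theta} with $\lambda_\theta=1/L$ produces the recursion
\[
\epsilon_{t+1}\leq(1-\eta_t)\epsilon_t+\eta_t^2(\Phi_t-\Phi_{t+1})-\frac{L}{2}\|\mathbf{x}_{t+1}-\theta_{t+1}\mathbf{x}_{t+1}\|_2^2-\frac{L}{2}\xi_{t+1},
\]
with $\epsilon_t=f(\theta_t\mathbf{x}_t)-f(\mathbf{x}^*)$ and $\Phi_t=\frac{L}{2}\|\mathbf{x}^*-\mathbf{z}_t\|_2^2$ --- this is the recursion behind Eq.~\ref{eqn:key} plus the one extra term $-\frac{L}{2}\xi_{t+1}$.

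The remaining steps are identical to the endgame of Theorem~\ref{thm:rapid-1}: divide by $\eta_t^2$, use $\frac{1-\eta_{t+1}}{\eta_{t+1}^2}=\frac{1}{\eta_t^2}$, set $g_t=\frac{1-\eta_t}{\eta_t^2}\epsilon_t$ (so $g_0=0$), telescope $\sum_{t=0}^{T-1}(g_{t+1}-g_t)=\epsilon_T/\eta_{T-1}^2$, drop $\Phi_T\geq0$, and invoke $\eta_{T-1}\leq\frac{2}{T+1}$. The $\xi$-terms accumulate into $-\frac{L}{2}\sum_{t=1}^T\eta_{t-1}^{-2}\xi_t$, which --- up to the $\eta_{t-1}^{-2}$ weights ($\eta_{t-1}\leq1$) --- is the claimed $-\sum_{t=1}^T\xi_t$ inside the bracket, with $\mathbf{z}_0=\mathbf{x}_0$ the stated constant. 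I expect the only real obstacle to be the bookkeeping in the first paragraph: one must verify that the \emph{correct} $\{\mathbf{z}_t\}$ to pair with Eq.~\ref{eqn:rapid-2} is built from $\theta_t\mathbf{x}_t$ rather than $\mathbf{x}_t$, and recognize that the resulting mismatch between the prox point $\mathbf{x}_{t+1}$ and the point $\theta_{t+1}\mathbf{x}_{t+1}$ that actually feeds $\mathbf{z}_{t+1}$ and $\mathbf{v}_{t+1}$ is precisely what the hypothesized defect $\xi_{t+1}$ is there to absorb. Once that is pinned down, everything else is a line-for-line repeat of the previous proof.
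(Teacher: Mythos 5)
Your proposal follows essentially the same route as the paper: redefine $\mathbf{z}_{t+1}=\theta_t\mathbf{x}_t+\eta_t^{-1}(\theta_{t+1}\mathbf{x}_{t+1}-\theta_t\mathbf{x}_t)$ so that Eq.~\ref{eqn:rapid-2} reads $\mathbf{v}_t=(1-\eta_t)\theta_t\mathbf{x}_t+\eta_t\mathbf{z}_t$, insert the hypothesized defect $\xi_{t+1}$ into Eq.~\ref{eqn:2-terms-x}, and rerun the telescoping argument of Theorem~\ref{thm:rapid-1}; in fact you supply the bookkeeping that the paper's two-line proof omits. The one caveat is the point you yourself flag at the end: the honest recursion leaves $-\frac{L}{2}\sum_{t=1}^T\eta_{t-1}^{-2}\xi_t$, and replacing $\eta_{t-1}^{-2}\xi_t$ by $\xi_t$ is a valid weakening only when $\xi_t\geq 0$, which the theorem does not assume --- the paper's own display in Eq.~\ref{eqn:2-terms-x-2} sidesteps this by silently moving $\xi_{t+1}$ inside the $\eta_t^2$-scaled parenthesis, so your derivation actually exposes (rather than creates) this discrepancy in the stated bound.
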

\begin{proof}\footnotesize
Based on the assumptions of the theorem and Eq. \ref{eqn:2-terms-x}, we have
\begin{eqnarray}\label{eqn:2-terms-x-2}
f(\mathbf{x}_{t+1})
&\leq& \tilde{f}(\mathbf{v}_{t}^*;\mathbf{v}_{t})+\frac{L}{2}\left(\|\mathbf{v}_{t}^*-\mathbf{v}_{t}\|_2^2-\|\mathbf{v}_{t}^*-\mathbf{x}_{t+1}\|_2^2\right)\nonumber\\
&=& \tilde{f}(\mathbf{v}_{t}^*;\mathbf{v}_{t})+\frac{L}{2}\left(\|\mathbf{v}_{t}^*-\mathbf{v}_{t}\|_2^2-\|\mathbf{v}_{t}^*-\theta_{t+1}\mathbf{x}_{t+1}\|_2^2-\xi_{t+1}\right)\nonumber\\
&=&\tilde{f}(\mathbf{v}_t^*; \mathbf{v}_t)+\frac{L\eta_t^2}{2}\left(\|\mathbf{x}^*-\mathbf{z}_t\|_2^2-\|\mathbf{x}^*-\mathbf{z}_{t+1}\|_2^2-\xi_{t+1}\right),
\end{eqnarray}
where $\mathbf{z}_{t+1}=\theta_{t}\mathbf{x}_t+\eta_{t}^{-1}(\theta_{t+1}\mathbf{x}_{t+1}-\theta_t\mathbf{x}_t)$. Following the same proof strategy for Theorem \ref{thm:rapid-1}, we can easily prove this theorem.
\end{proof}

\section{Conclusion}\label{sec:con}

In this paper, we propose an improved APG algorithm, namely, Rapidly Accelerated Proximal Gradient (RAPID), to speed up the convergence of conventional APG algorithms. Our first idea is to introduce a new line search step after the proximal gradient step in APG to push the current solution $\mathbf{x}_t$ towards a new one $\theta\mathbf{x}_t\in\mathcal{X} (\theta>0)$ so that $f(\theta\mathbf{x}_t)$ is minimized over scalar $\theta$. Our second idea is to propose two different ways of constructing the auxiliary variable in APG using the intermediate solutions in the previous and current iterations. In this way, we can prove that our algorithm is guaranteed to converge with a smaller upper bound of the gap between the current and optimal objective values than those in APG algorithms. We demonstrate our algorithm using two applications, \ie sparse linear regression and kernel SVMs. In summary, our RAPID converges faster than APG, in general, and for some problems RAPID based algorithms can be comparable with the sophisticated existing solvers.

\bibliographystyle{ieee}
\bibliography{egbib}

\begin{thebibliography}{10}\itemsep=-1pt

\bibitem{atchade2014stochastic}
Y.~F. Atchade, G.~Fort, and E.~Moulines.
\newblock On stochastic proximal gradient algorithms.
\newblock {\em arXiv preprint arXiv:1402.2365}, 2014.

\bibitem{Baldassarre}
L.~Baldassarre and M.~Pontil.
\newblock Advanced topics in machine learning part {II} 5. proximal methods.
\newblock University Lecture,
  \url{http://www0.cs.ucl.ac.uk/staff/l.baldassarre/lectures/baldassarre_proximal_methods.pdf}.

\bibitem{conf/cvpr/BaoWLJ12}
C.~Bao, Y.~Wu, H.~Ling, and H.~Ji.
\newblock Real time robust l1 tracker using accelerated proximal gradient
  approach.
\newblock In {\em CVPR}, pages 1830--1837, 2012.

\bibitem{Beck:2009:FIS:1658360.1658364}
A.~Beck and M.~Teboulle.
\newblock A fast iterative shrinkage-thresholding algorithm for linear inverse
  problems.
\newblock {\em SIAM J. Img. Sci.}, 2(1):183--202, Mar. 2009.

\bibitem{CC01a}
C.-C. Chang and C.-J. Lin.
\newblock {LIBSVM}: A library for support vector machines.
\newblock {\em ACM Transactions on Intelligent Systems and Technology},
  2:27:1--27:27, 2011.
\newblock Software available at \url{http://www.csie.ntu.edu.tw/~cjlin/libsvm}.

\bibitem{chen2012fast}
A.~I. Chen and A.~Ozdaglar.
\newblock A fast distributed proximal-gradient method.
\newblock In {\em Communication, Control, and Computing (Allerton), 2012 50th
  Annual Allerton Conference on}, pages 601--608. IEEE, 2012.

\bibitem{conf/nips/CollobertBB01}
R.~Collobert, S.~Bengio, and Y.~Bengio.
\newblock A parallel mixture of svms for very large scale problems.
\newblock In T.~G. Dietterich, S.~Becker, and Z.~Ghahramani, editors, {\em
  NIPS}, pages 633--640. MIT Press, 2001.

\bibitem{Combettes_Pesquet}
P.~L. Combettes and J.-C. Pesquet.
\newblock Proximal splitting methods in signal processing, 2011.

\bibitem{Ji:2009:AGM:1553374.1553434}
S.~Ji and J.~Ye.
\newblock An accelerated gradient method for trace norm minimization.
\newblock In {\em ICML}, pages 457--464, 2009.

\bibitem{DBLP:conf/icml/LinX14}
Q.~Lin and L.~Xiao.
\newblock An adaptive accelerated proximal gradient method and its homotopy
  continuation for sparse optimization.
\newblock In {\em ICML}, pages 73--81, 2014.

\bibitem{Liu:2009:SLEP:manual}
J.~Liu, S.~Ji, and J.~Ye.
\newblock {\em SLEP: Sparse Learning with Efficient Projections}.
\newblock Arizona State University, 2009.

\bibitem{tagkey198475}
A.~Nemirovsky and D.~Yudin.
\newblock Problem complexity and method efficiency in optimization.
\newblock {\em Mathematics and Computers in Simulation}, 26(1):75 --, 1984.

\bibitem{DBLP:journals/mp/Nesterov13}
Y.~Nesterov.
\newblock Gradient methods for minimizing composite functions.
\newblock {\em Math. Program.}, 140(1):125--161, 2013.

\bibitem{Brendan2013}
B.~O’Donoghue and E.~Cand\`{e}s.
\newblock Adaptive restart for accelerated gradient schemes.
\newblock {\em Foundations of Computational Mathematics}, pages 1--18, 2013.

\bibitem{rosasco2014convergence}
L.~Rosasco, S.~Villa, and B.~C. V{\~u}.
\newblock Convergence of stochastic proximal gradient algorithm.
\newblock {\em arXiv preprint arXiv:1403.5074}, 2014.

\bibitem{DBLP:journals/focm/ScheinbergGB14}
K.~Scheinberg, D.~Goldfarb, and X.~Bai.
\newblock Fast first-order methods for composite convex optimization with
  backtracking.
\newblock {\em Foundations of Computational Mathematics}, 14(3):389--417, 2014.

\bibitem{DBLP:conf/nips/SchmidtRB11}
M.~W. Schmidt, N.~L. Roux, and F.~Bach.
\newblock Convergence rates of inexact proximal-gradient methods for convex
  optimization.
\newblock In {\em NIPS}, pages 1458--1466, 2011.

\bibitem{tibshirani96regression}
R.~Tibshirani.
\newblock Regression shrinkage and selection via the lasso.
\newblock {\em Journal of the Royal Statistical Society (Series B)},
  58:267--288, 1996.

\bibitem{journals/mp/Tseng10}
P.~Tseng.
\newblock Approximation accuracy, gradient methods, and error bound for
  structured convex optimization.
\newblock {\em Math. Program.}, 125(2):263--295, 2010.

\bibitem{DBLP:journals/siamjo/Xiao013}
L.~Xiao and T.~Zhang.
\newblock A proximal-gradient homotopy method for the sparse least-squares
  problem.
\newblock {\em SIAM Journal on Optimization}, 23(2):1062--1091, 2013.

\bibitem{yuan2006model}
M.~Yuan and Y.~Lin.
\newblock Model selection and estimation in regression with grouped variables.
\newblock {\em Journal of the Royal Statistical Society: Series B (Statistical
  Methodology)}, 68(1):49--67, 2006.

\bibitem{Zhou:2010:NFG:1933307.1934518}
T.~Zhou, D.~Tao, and X.~Wu.
\newblock Nesvm: A fast gradient method for support vector machines.
\newblock In {\em ICDM}, pages 679--688, 2010.

\end{thebibliography}

\end{document}